\setlist{nosep,after=\vspace{\baselineskip}}
\DeclareMathOperator*{\argmax}{arg\,max}
 \newcommand{\card}{\text{card} } 
\newtheorem{definition}{Definition}
\newtheorem{lemma}{Lemma}
\newtheorem{corollary}{Corollary}
\newtheorem{proposition}{Proposition}
\newtheorem{theorem}{Theorem}
\newtheorem{assumption}{Assumption}
\newtheorem{conjecture}{Conjecture}
\def\ci{\perp\!\!\!\perp}
\def\NP{\textsc{NP}}
\newcommand{\mat}[1]{\mathbf{#1}}
\newcommand{\pr}[1]{\mathbb{P}(#1)}
\newcommand*{\qedS}{\hfill\ensuremath{\square}}%
\begin{document}
%
\date{\today}
\title{Entropic Causal Inference}
\author[1,*]{Murat Kocaoglu}
\author[1,\textdagger]{Alexandros G. Dimakis}
\author[1,\textdaggerdbl]{Sriram Vishwanath}
\author[2,\textsection]{Babak Hassibi}
\affil[1]{\small Department of Electrical and Computer Engineering, The University of Texas at Austin, USA}
\affil[2]{\small Department of Electrical Engineering, California Institute of Technology, USA}
\affil[ ]{\small \textit \textsuperscript{*} mkocaoglu@utexas.edu \textsuperscript{\textdagger}dimakis@austin.utexas.edu  \textsuperscript{\textdaggerdbl}sriram@ece.utexas.edu \textsuperscript{\textsection}hassibi@systems.caltech.edu}
\renewcommand\Authands{ and }

\maketitle
\begin{abstract}
We consider the problem of identifying the causal direction between two discrete random variables using observational data. 
Unlike previous work, we keep the most general functional model but make an assumption on the unobserved exogenous variable: 
Inspired by Occam's razor, we assume that the exogenous variable is \textit{simple} in the true causal direction. 
We quantify simplicity using R\'enyi entropy. 
Our main result is that, under natural assumptions, if the exogenous variable has low $H_0$ entropy (cardinality) in the true direction, it must have high $H_0$ entropy in the wrong direction. We establish several algorithmic hardness results about 
estimating the minimum entropy exogenous variable. We show that the problem of finding the exogenous variable with minimum entropy is equivalent to the problem of finding minimum joint entropy given $n$ marginal distributions, also known as minimum entropy coupling problem. We propose an efficient greedy algorithm for the minimum entropy coupling problem, that for $n=2$ provably finds a local optimum. This gives a greedy algorithm for finding the exogenous variable with minimum $H_1$ (Shannon Entropy).
Our greedy entropy-based causal inference algorithm has similar performance to the state of the art additive noise models in real datasets. One advantage of our approach is that we make no use of the values of random variables but only their distributions. Our method can therefore be used for causal inference for both ordinal and also categorical data, unlike additive noise models.  
\end{abstract}

\section{Introduction}

Causality has been studied under several frameworks including potential outcomes~\cite{Rubin1974}
and structural equation modeling~\cite{Pearl2009}. Under the Pearlian framework~\cite{Pearl2009} it is possible to discover some causal directions between variables using only observational data with conditional independence tests. The PC algorithm~\cite{Spirtes2001} and its variants fully characterize which 
causal directions can be learned in the general case. For large graphs, GES algorithm~\cite{Chickering2002} provides a score-based test to greedily identify the highest scoring causal graph given the data. Unfortunately, these approaches do not guarantee the recovery of true causal direction between every pair of variables, since typically data could be generated by several statistically equivalent causal graphs. 

A general solution to the causal inference problem is to conduct experiments, also called interventions.  
An intervention forces the value of a variable without affecting the other system variables. This removes the effect of its causes, effectively creating a new causal graph. These changes in the causal graph create a post-interventional distribution among variables, which can be used to identify some additional causal relations in the original graph. The procedure can be applied repeatedly to fully identify any causal graph \cite{Hauser2012a}, \cite{Hauser2012b}, \cite{Hyttinen2013}, \cite{Shanmugam2015}.

Unfortunately, for many problems, it can be very difficult to create interventions since they require additional experiments after the original data collection. Researchers would still like to discover causal relations between variables using only observational data, using so-called data-driven causality. Several recent works \cite{Chen2014,Shajarisales2015} have developed such methods. To be able to make any conclusions on causal directions in this case, additional assumptions must be made about the mechanisms that generate the data.

In this paper we focus on the simplest causal discovery problem that involves only two variables. 
The two causal graphs $X\rightarrow Y$ and $X\leftarrow Y$ are statistically indistinguishable so conditional independence tests cannot make any causal inference from observational data without interventions. Statistical indistinguishability easily follows from the fact that any joint distribution on two variables $p(x,y)$ can be factorized both as $p(x)p(y/x)$ and $p(y)p(x/y)$.

The most popular assumption for two-variable data-driven causality is the additive noise model (ANM) \cite{Shimizu2006}. In ANM, any outside factor is assumed to affect the effect variable additively, which leads to the equation $Y = f(X)+E,E\ci X$. Although restrictive, this assumption leads to strong theoretical guarantees in terms of identifiability, and provides the state of the art accuracy in real datasets. 
~\cite{Shimizu2006} showed that if $f$ is linear and the noise is non-Gaussian the causal direction is identifiable. 
~\cite{Hoyer2008} showed that when $f$ is non-linear, irrespective of the noise, identifiability holds in a non-adverserial setting of system parameters. ~\cite{Peters2011} extended ANM to discrete variables.

Another approach is to exploit the postulate that the cause and mechanism are in general independently assigned by nature. The notion of \emph{independence} here is vague and one needs to assign maps, or conditional distributions to random variables to argue about independence of cause and mechanism. In this direction an information-geometry based approach is suggested \cite{Janzing2012}. Independence of cause and mechanism is captured by treating the log-slope of the function as a random variable, and assuming that it is independent from the cause. In the case of a deterministic relation $Y=f(X)$, there are theoretical guarantees on identifiability. However, this assumption is restrictive for real data.

Previous work exploited these two ideas, additive noise, and independence of cause and mechanism, to draw data-driven causal conclusions about problems in a diverse range of areas from astronomy to neuroscience \cite{Shajarisales2015}, \cite{Scholkopf2015}. \cite{Shajarisales2015} uses the same idea that the cause and effect are independent in the time series of a linear filter. They suggest the spectral independence criterion, which is robust to time shifts. \cite{Chen2014} uses kernel space embeddings with the assumption that the cause distribution $p(x)$ and mechanism $p(y|x)$ are selected independently to distinguish cause from effect. 

As noted by \cite{Chen2014}, although conceptually proposed before, using Kolmogorov complexity of the factorization of the joint distribution $p(y|x)p(x)$ and $p(x|y)p(y)$ as a criterion for deciding causal direction has not been used successfully until now. 

The use of information theory as a tool for causal discovery is currently gaining increasing attention. This is through different appoaches, e.g., for time-series data, Granger causality and Directed Information can be used \cite{Granger1969,Etesami2016,Quinn2015}, see also \cite{Munich2016}. However, researchers have not used entropy as a measure of simplicity in the causal discovery literature, probably because the entropies $H(Y|X)$ and $H(X|Y)$ do not give us any more information than $H(X)$ and $H(Y)$, due to the symmetry $H(Y)+H(X|Y)=H(X)+H(Y|X)$. In our work, as we will explain, we minimize $H(E)$ which initially sounds similar, \textit{but is fundamentally different} from $H(Y|X)$.  Entropy has found some additional uses in the causality literature recently: In \cite{Gao2016}, authors use maximum mutual information between $X,Y$ in order to quantify the causal strength of a known causal graph.

The work that is most similar to ours in spirit is \cite{Mooij2010}, which also drops the additive noise assumption. Their approach and setup are different in many ways: Authors work with continuous data. To be able to handle this generic form, they have to make strong assumptions on the exogenous variable, function, and distribution of the cause: \cite{Mooij2010} assume that the exogenous variable is a standard Gaussian, a Gaussian mixture prior for the cause, and a Gaussian process as the prior of the function.


\subsection{Our contributions}
In this paper, we propose a novel approach to the causal identifiability problem for discrete variables. Similar to \cite{Mooij2010}, we keep the most general functional model, but only put an assumption on the exogenous (background) variable. Based on Occam's razor, we employ a simplicity assumption on the unobserved exogenous variable. We use R\'enyi entropy, which is defined as $H_a(X) = \frac{1}{1-a}\log{\left(\sum_i p_i^a \right)}$, for a random variable $X$ with state probabilities $p_i$. We focus on two special cases of R\'enyi entropy: $H_0$, which corresponds to the logarithm of the number of states, and $H_1$ which corresponds to Shannon entropy, but our framework can be extended. 

Specifically, if the true causal direction is $X\rightarrow Y$,  then the random variable $Y$ is an arbitrary function of $X$ and an exogenous variable $E$: $Y= f(X,E)$ where $E$ is independent from the cause $X$. Our key assumption is that the exogenous variable $E$ is \textit{simple}, i.e., has low R\'enyi entropy. The postulate is that for any model in the wrong direction 
$X= f'(Y,\tilde{E})$, the exogenous variable $\tilde{E}$ has high R\'enyi entropy. We are able to prove this result for the $H_0$ special case of R\'enyi entropy, assuming generic distributions for $X,Y$. Furthermore, we empirically show that using $H_1$ Shannon entropy we obtain practical causality tests that work with high probability in synthetic datasets and that slightly outperforms the previous state of the art in real datasets. 

Our assumption is an entropic interpretation of Occam's razor, motivated by what $E$ represents in the causal model.
The exogenous variable captures the combined effect of all the variables not included in the system model, which affect the distribution of $Y$. Our causal assumption can be stated as \emph{``there should not be too much complexity not included in the causal model"}.
For $a\rightarrow 1$, \textit{i.e.}, Shannon entropy, $H(X)+H(E)$, $H(Y)+H(\tilde{E})$ are the number of random bits required to generate an input for the causal system $X\rightarrow Y$ and $X\leftarrow Y$, respectively. The simplest explanation of an observed joint distribution, i.e., the direction which requires nature to generate smaller number of random bits is selected as the true causal model. More precisely we have the following: 
\begin{assumption}\label{ass:causal}
Entropy of the exogenous variable $E$ is small in the true causal direction.
\end{assumption}

The notions of simplicity that we consider are $H_0$, which is log-cardinality, and $H_1$, which is Shannon entropy. One significant advantage of using Shannon entropy as a simplicity metric is that it can be estimated more robustly in the presence of measurement errors, unlike cardinality $H_0$. 

We prove an identifiability result for $H_0$ entropy, i.e., cardinality of $E$: If the probability values are not adversarially chosen, for most functions, the true causal direction is identifiable under Assumption \ref{ass:causal}. Based on experimental evidence, we conjecture that a similar identifiability result must hold for Shannon entropy $H_1$. 

To use our framework we need algorithms that explain a dataset by finding an exogenous variable $E$ with minimum cardinality $H_0$ and minimum Shannon entropy $H_1$. Since the entropies of $X$ and $Y$ can be very different, any metric to determine
the true causal direction cannot only consider the entropy of the exogenous variable without incorporating the entropy of the cause. 
We explain the exogenous variable in both directions and declare the causal direction to be the one with the smallest joint entropy $H_a(X)+H_a(E)$ versus $H_a(Y)+H_a(\tilde{E})$. Our method can be applied for any R\'enyi entropy $H_a$ but in this paper we only use $a=0$ and $a=1$.

Unfortunately, minimizing $H_0(E)$ seems very hard for real datasets since it offers no noise robustness. For Shannon entropy we can do much better for real data. The first step in obtaining a practical algorithm is showing that the minimum $H_1$ explanation is equivalent to the following problem:
For $n$ random variables with given marginal distributions, find a joint distribution with the minimum Shannon entropy that is consistent with the given marginals. This problem is called the minimum Shannon entropy coupling and is known to be NP hard \cite{Kovacevic2012}. We propose a greedy approximation algorithm for this problem that empirically performs very well. We also prove that, for $n=2$, our algorithm always produces a local minimum.

In summary our contributions in this paper include: 
\begin{itemize}
\item We show identifiability for generic low-entropy causal models under Assumption \ref{ass:causal} with $H_0$.
\item We show that the problems of identifying the minimum cardinality ($H_0$) exogenous variable, and identifying the minimum Shannon entropy ($H_1$) exogenous variable given a joint distribution are both $\NP$ hard.
\item We design a novel greedy algorithm for the minimum entropy coupling problem, which turns out to be equivalent to the problem of finding exogenous variable with minimum $H_1$ entropy. 
\item We empirically validate the conjecture that the causal direction is identifiable under Assumption \ref{ass:causal} with $H_1$, using experiments on synthetic datasets.
\item We empirically show that our causal inference algorithm based on Shannon entropy minimization has slightly better performance than the existing best algorithms on a real causal dataset. Interestingly, our algorithm uses only the probability distributions rather than the actual values of the random variables, and hence is applicable to categorical variables. 
\end{itemize}

\subsection{Background and Notation}
A tuple $\mathcal{M} = (X,U,\mathcal{F},D,p)$ is a causal model when, $1)$ $\mathcal{F} = \{f_i\}$ are deterministic functions, $2)$ $X = \{X_i\}$ are a set of endogenous (observed) variables $U = \{U_i\}$ are a set of exogenous (latent) variables with $X_i = f_i(Pa_i,U_i),\forall i$ where $Pa_i$ are the endogenous parents and $U_i$ is the exogenous parent of $X_i$ in directed acyclic graph $D$, $3)$ $U$ are mutually independent with respect to $p$. The observable variable set $X$ has a joint distribution implied by the distributions of $U$, and the functional relations $f_i$. $D$ is then a Bayesian network for the induced joint distribution of endogenous variables. A standard assumption employed in Pearl's model \emph{causal sufficiency} is also used here: Every exogenous variable is a direct parent of at most one endogenous variable.

In this paper, we consider a simple two variable causal system which contains only two endogenous variables $X,Y$. Assume $X$ causes $Y$, which is represented as $X\rightarrow Y$. The model is determined only by one exogenous variable $E$, and a function $f$, where $Y = f(X,E)$.  The probability distribution of $X$ and $E$, and $f$ determines the  distribution of $Y$. This model is shown by the tuple $\mathcal{M} = (\{X,Y\},E,f,X\rightarrow Y,p_{X,E})$. Notice that we do not assign an exogenous variable to $X$, since it is the source node in the graph.

We denote the set $\{1,2,\cdots,n\}$ by $[n]$. $\sum_ix_i$ is meant to run through every possible index. $\log$ refers to the logarithm base 2. For two variables $X,Y$, $\mat{Y|X}$ and $\mat{X|Y}$ denote the conditional probability distribution matrices, i.e., $\mat{Y|X}(i,j) = p(y=i|x=j)$ and $\mat{X|Y}(i,j) = p(x = i|y = j)$. The statistical independence of two random variables $X$ and $E$ are shown by $X\ci E$. For notational convenience, probability distribution of random variable $X$ is shown by $p(x)$ as well as $p_X(x)$. $\mat{x}$ shows the distribution of $X$ in vector form , i.e., $x_i = \mat{x}(i)=\pr{X=i}$. $n-1$ simplex is the set of points $x$ in $n$ dimensional Euclidean space that satisfy $\sum_ix(i)=1$. $\card$ is the cardinality of a set.

\section{Causal Model with Minimum Cardinality Exogenous Variable}
Consider the causal model $\mathcal{M} = (\{X,Y\},E_0,f_0,X\rightarrow Y,p_{X,E})$. The task is to identify the underlying causal graph $X\rightarrow Y$ using independent identically distributed samples $\{(x_i,y_i)\}_i$. Assuming causal sufficiency, this task reduces to deciding whether $X$ causes $Y$ or $Y$ causes $X$. To isolate the identifiability problem from estimation errors 
due to finite samples, we assume that the joint distribution of $(X,Y)$ is available. Most proofs are deferred to the Appendix.

One way to identify that $X$ causes $Y$ is by showing that although there exists a function $f$ and random variable $E$ with $Y = f(X,E), X\ci E$, there is no function, random variable pair $(g,\tilde{E})$ such that $X = g(Y,\tilde{E}),Y\ci \tilde{E}$. However, without more assumptions, this is not possible: For any joint distribution one can find valid causal models for both $X\rightarrow Y,X\leftarrow Y$. This is widely known, although for completeness, we provide a proof (Lemma \ref{lem:unidentifiability} in the Appendix). 

Even when the true causal graph is known, one can create different constructions of $f,E$ with $Y=f(X,E),X\ci E$. There is no way to distinguish the true causal model. However, even though we cannot recover the actual function and the exogenous variable, we can still show identifiability.

First, we give an equivalent characterization of a causal model on two variables. 

\begin{definition}[\bf Block Partition Matrices] Consider a matrix $\mat{M}\in\{0,1\}^{n^2\times m}$. Let $\mat{m}_{i,j}$ represent the $i+(j-1)n$ th row of $\mat{M}$. Let $S_{i,j}=\{k\in[m]: \mat{m}_{i,j}(k)\neq 0\}$. $\mat{M}$ is called a block partition matrix if it belongs to $\mathcal{C} \coloneqq \{\mat{M}: \mat{M}\in \{0,1\}^{n^2\times m}, \bigcup_{i\in[n]} S_{i,j} = [m], S_{i,j}\cap S_{l,j}=\emptyset, \forall i\neq l \}$. 
\end{definition}

$\mathcal{C}$ thus stands for $0,1$ matrices with $n^2$ rows and $m$ columns where each block of $n$ rows correspond to a partitioning of the set $[m]$. We make the following key observation:
\begin{lemma}
\label{lem:characterization}
Given discrete random variables $X,Y$ with distribution $p(x,y)$, $\exists$ a causal model $\mathcal{M} = (\{X,Y\},E, f, X\rightarrow Y, p_{X,E})$, $E\in\mathcal{E}$ with $\card(\mathcal{E}) = m$ if and only if $\exists \mat{M}\in\mathcal{C}, \mat{e}\in \mathbb{R}_+^{m}$ with $\sum_i \mat{e}(i) = 1$ that satisfy $vec(\mat{Y|X})=\mat{Me}$.
\end{lemma}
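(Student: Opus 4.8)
The plan is to exhibit an explicit bijection between causal models $\mathcal{M} = (\{X,Y\}, E, f, X\rightarrow Y, p_{X,E})$ with $|\mathcal{E}| = m$ and pairs $(\mat{M}, \mat{e})$ with $\mat{M} \in \mathcal{C}$, $\mat{e}$ a probability vector of length $m$, satisfying $vec(\mat{Y|X}) = \mat{M}\mat{e}$. The key structural idea is that once we fix the causal direction $X \rightarrow Y$, a deterministic function $f(x,e)$ is nothing but, for each value $e = k$ of the exogenous variable, a map $x \mapsto f(x,k)$ from $[n]$ to $[n]$ (assuming for concreteness both $X$ and $Y$ range over $[n]$); and the distribution $p_{X,E}$ factors as $p(x)p(e)$ by independence. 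So the ``free data'' of a causal model is: the cause marginal $p(x)$ (which is fixed by the given joint and plays no role here), the function $f$, and the exogenous distribution $\mat{e}$.

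First I would go from a causal model to a matrix. Given $f$ and $E \in \mathcal{E} = [m]$ with $p(e = k) = \mat{e}(k)$, define $\mat{M} \in \{0,1\}^{n^2 \times m}$ by setting $\mat{m}_{i,j}(k) = 1$ iff $f(j, k) = i$, i.e. iff exogenous value $k$ together with cause value $j$ produces effect value $i$. For fixed $j$ and fixed $k$, there is exactly one $i$ with $\mat{m}_{i,j}(k) = 1$ (namely $i = f(j,k)$), so the sets $S_{i,j}$ over $i \in [n]$ partition $[m]$: this is precisely the block-partition condition defining $\mathcal{C}$. Then $(\mat{M}\mat{e})_{i+(j-1)n} = \sum_{k : f(j,k) = i} \mat{e}(k) = \sum_{k : f(j,k)=i} p(e=k) = p(f(X,E) = i \mid X = j) = p(Y = i \mid X = j)$, using $E \ci X$ in the middle equality. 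This is exactly the $(i,j)$ entry of $\mat{Y|X}$ in the stated vectorization, so $vec(\mat{Y|X}) = \mat{M}\mat{e}$.

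Conversely, given $\mat{M} \in \mathcal{C}$ and a probability vector $\mat{e}$ with $vec(\mat{Y|X}) = \mat{M}\mat{e}$, I would read off a function: for each column block $j$, the partition $\{S_{i,j}\}_{i \in [n]}$ of $[m]$ lets us define $f(j, k) = $ the unique $i$ with $k \in S_{i,j}$. Take $E$ uniform-free on $[m]$ with $p(e=k) = \mat{e}(k)$, independent of $X$ (whose marginal is inherited from $p(x,y)$). Reversing the computation above shows this model induces conditional $p(Y=i \mid X=j)$ equal to $(\mat{M}\mat{e})_{i+(j-1)n}$, which by hypothesis equals the true $\mat{Y|X}(i,j)$; combined with the correct $p(x)$ this reproduces $p(x,y)$, so $\mathcal{M}$ is a valid causal model with $|\mathcal{E}| = m$.

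I do not anticipate a serious obstacle; this is essentially a bookkeeping lemma. The one point requiring a little care is the direction of the equivalence in which $|\mathcal{E}| = m$ is claimed: strictly, a valid $\mat{M} \in \mathcal{C}$ with some zero entries of $\mat{e}$ corresponds to a causal model where not all $m$ exogenous states are ``used,'' so one should either allow zero-probability exogenous states or phrase the cardinality claim as ``can be realized with $|\mathcal{E}| \le m$,'' matching how the lemma will be applied (minimizing cardinality). I would also make sure the indexing convention $\mat{m}_{i,j} = $ row $i + (j-1)n$ is used consistently so that $vec(\cdot)$ stacks columns of $\mat{Y|X}$ in the matching order; the cleanest exposition keeps $vec$ as column-stacking throughout and never unpacks it again after the initial definition.
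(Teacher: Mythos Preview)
Your proposal is correct and follows essentially the same approach as the paper: both directions hinge on the bijection between deterministic functions $f:[n]\times[m]\to[n]$ and block-partition matrices (via $\mat{m}_{i,j}(k)=1 \iff f(j,k)=i$), together with the computation $p(Y=i\mid X=j)=\sum_{k:f(j,k)=i}p(E=k)$ using $E\ci X$. Your write-up is in fact slightly more streamlined than the paper's, which in the reverse direction takes a detour through de-vectorizing into matrices $\mat{U}_i$ and then argues about generating $E$-realizations from $(X,Y)$-samples; your observation about zero-probability exogenous states is a valid caveat that the paper does not explicitly address.
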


In other words, the existence of a causal pair $X\rightarrow Y$ is equivalent to the existence of a block partition matrix $\mat{M}$ and a vector $\mat{e}$ of proper dimensions with $vec(\mat{Y|X})=\mat{Me}$. 

For simplicity, assume $\lvert\mathcal{X}\rvert = \lvert\mathcal{Y}\rvert = n$. We later remove this constraint. We first show that any joint distribution can be explained using a variable $E$ with $n(n-1)+1$ states.

\begin{lemma}[\textbf{Upper Bound on Minimum Cardinality of $E$}]
\label{lem:decompUpper}
Let $X\in\mathcal{X},Y\in\mathcal{Y}$ be two random variables with joint probability distribution $p_{X,Y}(x,y)$, where $\lvert \mathcal{X}\rvert = \lvert \mathcal{Y}\rvert = n$. Then $\exists$ a causal model $Y=f(X,E),X\ci E$ that induces $p_{X,Y}$, where $E$ has support size $n(n-1)+1$.
\end{lemma}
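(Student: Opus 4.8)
The plan is to reduce the statement to a geometric fact via Lemma~\ref{lem:characterization} and then apply Carath\'eodory's theorem. By that lemma, a causal model $X\to Y$ whose exogenous variable has support size $m$ exists if and only if there is a block partition matrix $\mat{M}\in\mathcal{C}$ with $m$ columns together with a probability vector $\mat{e}\in\mathbb{R}_+^m$ satisfying $vec(\mat{Y|X})=\mat{Me}$. So it is enough to produce such $\mat{M}$ and $\mat{e}$ with $m=n(n-1)+1$.

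First I would pin down what the columns of a block partition matrix look like. Fix a column index $k\in[m]$ and a block $j\in[n]$. The two defining conditions $\bigcup_{i\in[n]}S_{i,j}=[m]$ and $S_{i,j}\cap S_{l,j}=\emptyset$ (for $i\neq l$) say exactly that $k$ belongs to precisely one support $S_{i,j}$; equivalently, the restriction of the $k$-th column of $\mat{M}$ to block $j$ is the $i$-th standard basis vector of $\mathbb{R}^n$. Hence the columns of matrices in $\mathcal{C}$ are precisely the vectors $vec(\mat{V})$ with $\mat{V}\in\{0,1\}^{n\times n}$ having exactly one $1$ in every column; I will call such $\mat{V}$ \emph{column-selection} matrices. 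Conversely, stacking any $n(n-1)+1$ such vectors (repetitions allowed) as columns produces a matrix in $\mathcal{C}$.

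Next, note that $\mat{Y|X}$ is column-stochastic, its $j$-th column being the conditional distribution $p(y\,|\,x=j)$, a point of the simplex $\Delta_{n-1}$. Thus $vec(\mat{Y|X})$ lies in the polytope of column-stochastic $n\times n$ matrices, which is $\Delta_{n-1}^{\,n}$: its vertices are exactly the column-selection matrices, and its affine hull has dimension $n(n-1)$. Applying Carath\'eodory's theorem to the vertex set, $vec(\mat{Y|X})$ is a convex combination of at most $n(n-1)+1$ vertices, say $vec(\mat{Y|X})=\sum_{k} e_k\, vec(\mat{V}_k)$ with $e_k\ge 0$ and $\sum_k e_k = 1$. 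Taking $\mat{M}$ to have these $vec(\mat{V}_k)$ as columns — padding with arbitrary column-selection columns carrying weight $0$ so that there are exactly $n(n-1)+1$ of them — and $\mat{e}=(e_k)_k$, we obtain $\mat{M}\in\mathcal{C}$, a probability vector $\mat{e}$, and $vec(\mat{Y|X})=\mat{Me}$. Lemma~\ref{lem:characterization} then delivers the claimed model.

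The only delicate point is the first step: translating the combinatorial ``block partition'' definition into the statement ``exactly one $1$ per column within each block of $n$ rows,'' which is what identifies the columns of $\mathcal{C}$ with the vertices of $\Delta_{n-1}^{\,n}$. After that, the entire content is the dimension count $\dim\Delta_{n-1}^{\,n}=n(n-1)$ plus Carath\'eodory, with no estimates to grind through. A more hands-on alternative would be to peel off deterministic functions $x\mapsto f(x,e)$ one value of $e$ at a time, but I expect the Carath\'eodory route to be shorter and to give precisely the bound $n(n-1)+1$.
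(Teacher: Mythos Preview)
Your proposal is correct and takes a genuinely different route from the paper. The paper's proof is constructive: starting from $\mat{Y|X}$, it iteratively subtracts from each column's maximum entry the minimum over columns of those maxima, zeroing out at least one matrix entry per step; since column sums are preserved, after $n(n-1)$ steps each column has a single common nonzero value, so $n(n-1)+1$ terms suffice. This is precisely the ``peel off one $e$-value at a time'' alternative you mention in your last sentence, and it is in fact the same greedy procedure the paper later reuses as Algorithm~\ref{alg:heuristic} for entropy minimization. Your Carath\'eodory argument is shorter and isolates the geometric reason the number $n(n-1)+1$ appears: it is $1+\dim\Delta_{n-1}^{\,n}$. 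What the paper's approach buys is an explicit algorithm (and a byproduct: with $nnz$ nonzero entries one needs only $nnz-1$ terms), while yours gives a cleaner existence proof with no iteration to track.
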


We can show that, if the columns of $\mat{Y|X}$ are uniformly sampled points in the $n-1$ dimensional simplex, then $n(n-1)$ states are also necessary for $E$ (see Proposition \ref{prop:helper} in the Appendix). This shows, unless designed by nature through the causal mechanism, exogenous variable cannot have small cardinality. Based on this observation, the hope is to prove that in the wrong causal direction, say $X\rightarrow Y$ and we find an $\tilde{E}\ci Y$ such that $X=g(Y,\tilde{E})$ for some $g$, the exogenous variable $\tilde{E}$ has to have large cardinality. In the next section, we show this is actually through, under mild conditions on $f$.

\subsection{Identifiability for \texorpdfstring{$H_0$}{} entropy}
\label{subsec:identifiability}
In a causal system $Y = f(X,E)$, nature chooses the random variables $X,E$, and function $f$, and the conditional probability distributions are then determined by these. We are interested in the cardinality of variables $\tilde{E}\ci X$ in the wrong causal direction $X = g(Y,\tilde{E})$. Considering $\mat{X|Y}$, we can show that the same lower bound of $n(n-1)$ still holds despite nature now chooses $E$ and $X$ randomly, rather than choosing the columns of $\mat{X|Y}$ directly. A mild assumption on $f$ is needed to avoid degenerate cases (For counterexamples see the appendix).

\begin{definition}[\bf Generic Function]
Let $Y = f(X,E)$ where variables $X,Y,E$ have supports $\mathcal{X},\mathcal{Y},\mathcal{E}$, respectively. Let $S_{y,x} = f_x^{-1}(y)\subset \mathcal{E}$ be the inverse map for $x,e$, i.e., $S_{y,x} = \{e\in\mathcal{E}: y = f(x,e)\}$. A function $f$ is called ``generic", if for each $(x_1,x_2,y)$ triple $f_{x_1}^{-1}(y) \neq f_{x_2}^{-1}(y)$ and for every $(x,y)$ pair $f_x^{-1}(y)\neq \emptyset$.
\end{definition}

In other words $f$ is called generic if $y^{th}$ row in the $x_1^{th}$ block of matrix $\mat{M}$ in the decomposition $vec(\mat{Y|X}) = \mat{Me}$ is different from $y^{th}$ row in the $x_2^{th}$ block, and both are nonzero. This is not a restrictive condition, for example if $p(y|x)$ are all different, no two rows of $\mat{M}$ can be the same. For any given conditional distribution, if the probabilities are perturbed by arbitrarily small continuous noise, the corresponding $f$ will be generic almost surely. We have the following main identifiability result:
\begin{theorem}[\textbf{Identifiability}]
\label{thm:main}
Consider the causal model $\mathcal{M}=(\{X,Y\},E_0,f_0,X\rightarrow Y,p_{X,E_0})$ where the random variables $X,Y$ have $n$ states, $E_0\ci X$ has $\theta$ states and $f$ is a generic function . 

If the distributions of $X$ and $E$ are uniformly randomly selected from the $n-1$ and $\theta -1 $ simplices, then with probability 1, any $\tilde{E}\ci Y$ that satisfies $X = g(Y,\tilde{E})$ for some deterministic function $g$ has cardinality at least $n(n-1)$.
\end{theorem}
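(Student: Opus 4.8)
The plan is to work through the block-partition characterization of Lemma~\ref{lem:characterization}, applied to the \emph{backward} direction $Y\to X$. Since $\lvert\mathcal X\rvert=\lvert\mathcal Y\rvert=n$, a model $X=g(Y,\tilde E)$ with $Y\ci\tilde E$ and $\lvert\tilde{\mathcal E}\rvert=m$ exists if and only if there are a block partition matrix $\mat M\in\mathcal C$ with $m$ columns and a probability vector $\mat e$ with $vec(\mat{X|Y})=\mat M\mat e$, where now the $n$ blocks of rows are indexed by the values of $Y$. So the statement to prove becomes: with probability $1$ over $p_X,p_{E_0}$, no such decomposition exists with $m\le n(n-1)-1$. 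For each fixed $m$ there are only finitely many candidate $\mat M\in\mathcal C$, so it is enough to fix one of them and show that the set of parameters $(p_X,p_{E_0})$ for which $vec(\mat{X|Y})=\mat M\mat e$ is solvable has Lebesgue measure zero; a finite union of null sets is null.

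So fix $\mat M\in\mathcal C$ with $m\le n(n-1)-1$ columns. Solvability forces $vec(\mat{X|Y})$ into the affine hull $A_{\mat M}$ of the columns of $\mat M$, an affine subspace of dimension at most $m-1\le n(n-1)-2$. Next I would use that $\mat{X|Y}$ is an explicit rational function of the parameters: writing $p(x,y)=p_X(x)\sum_{e\in (f_0)_x^{-1}(y)}p_{E_0}(e)$, a polynomial, and $\mat{X|Y}(x,y)=p(x,y)/\sum_{x'}p(x',y)$ with denominators $p_Y(y)$ that are positive almost everywhere, the membership condition $vec(\mat{X|Y})\in A_{\mat M}$ becomes, after clearing denominators, a finite system of polynomial equations in $(p_X,p_{E_0})$. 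The parameter domain is a product of two probability simplices, whose Zariski closure is irreducible, so this system either holds identically on the whole domain or cuts out a measure-zero set. It therefore remains to rule out the former: to exhibit one $(p_X,p_{E_0})$ with $vec(\mat{X|Y})\notin A_{\mat M}$, i.e.\ to show that the family of conditional matrices $\mat{X|Y}$ realizable from the forward model is not contained in $A_{\mat M}$.

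This last step is the heart of the proof and an essential place where genericity of $f_0$ enters (for non-generic $f_0$ one can build explicit counterexamples, cf.\ the appendix). The route I would take is to specialize $p_{E_0}$ to put almost all mass on a single state $e^*$; then $\mat{Y|X}$ degenerates to a $0$--$1$ matrix whose $x$-th column is the indicator of $f_0(x,e^*)$, and $\mat{X|Y}$ becomes an explicit matrix whose $y$-th column is proportional to $\bigl(p_X(x)\,\mathbf 1[f_0(x,e^*)=y]\bigr)_x$. Letting $p_X$ vary over its simplex and $e^*$ range over the $\theta$ states, one collects a family of such matrices, and using only the combinatorial genericity of $f_0$ (non-emptiness and pairwise distinctness of the inverse images $(f_0)_x^{-1}(y)$) one argues this family is not contained in the combinatorially defined subspace $A_{\mat M}$ whenever $\mat M$ has fewer than $n(n-1)$ columns; a continuity / Zariski-density argument then transfers the conclusion from the boundary of the $p_{E_0}$-simplex back to generic interior $p_{E_0}$. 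Equivalently, one computes the differential of $(p_X,p_{E_0})\mapsto vec(\mat{X|Y})$ at a generic point and checks it is not annihilated by all linear functionals defining $A_{\mat M}$; Proposition~\ref{prop:helper}, which handles the ``fully generic columns'' case, indicates the right target here.

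The main obstacle is exactly this structural step, and what makes it nonroutine is that a dimension count does not settle it: the image of $(p_X,p_{E_0})\mapsto vec(\mat{X|Y})$ has dimension at most $n+\theta-2$, which when $\theta$ is small is far below $\dim A_{\mat M}$, so the image could a priori be contained in some $A_{\mat M}$. One really has to use the combinatorial structure of $f_0$ to certify that the realizable $\mat{X|Y}$ sit in general position relative to the finitely many subspaces $A_{\mat M}$ associated with small block partition matrices --- the genericity hypothesis on $f_0$ being precisely the condition under which such a certificate exists, since coincidences among the inverse images $(f_0)_x^{-1}(y)$ are what would destroy it.
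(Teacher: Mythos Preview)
Your high-level framework matches the paper's: both invoke Lemma~\ref{lem:characterization} for the backward direction, fix $\mat M\in\mathcal C$ with $m<n(n-1)$ columns, reduce ``$vec(\mat{X|Y})$ lies in the column span of $\mat M$'' to a polynomial constraint on the random parameters after clearing the denominators $p_Y(y)$, and finish with a union bound over the finitely many $\mat M$. The divergence is exactly at what you call the heart of the proof---certifying that polynomial is not identically zero---and there your proposal has a genuine gap.

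Your route of specializing $p_{E_0}$ toward a vertex $e^*$ is not carried out, and faces two concrete obstacles you do not address. First, at an actual vertex $Y=f_0(X,e^*)$ is deterministic; whenever $x\mapsto f_0(x,e^*)$ is not surjective some $p_Y(y)$ vanish and the entries of $\mat{X|Y}$ are undefined, so the boundary limit of $vec(\mat{X|Y})$ need not exist, and the ``continuity / Zariski-density'' transfer is delicate precisely because you already cleared those denominators. Second, and more fundamentally, the generic hypothesis constrains the \emph{sets} $S_{y,x}=(f_0)_x^{-1}(y)\subset\mathcal E$, whereas a single-$e^*$ slice only records the map $x\mapsto f_0(x,e^*)$; you give no mechanism linking the combinatorics of the $S_{y,x}$ to the geometry of the specialized $\mat{X|Y}$, so ``genericity implies the family escapes $A_{\mat M}$'' remains an assertion. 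Your alternative of checking the differential is in the same position.

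The paper's argument is direct and avoids all degeneration. It drops the $n$ entries of $vec(\mat{X|Y})$ with $X=n$; the reduced $\bar{\mat M}$ then has $n(n-1)$ rows and $m<n(n-1)$ columns, hence a nonzero left null vector $\alpha$, forcing $\alpha\bar{\mat q}=0$, i.e.
\[
\sum_{i=1}^n\frac{\sum_{j=1}^{n-1}\alpha_{i,j}\,p_{i,j}x_j}{\sum_{j=1}^n p_{i,j}x_j}=0,\qquad p_{i,j}=\sum_{k\in S_{i,j}}e_k,
\]
with $x_i,e_k$ the i.i.d.\ exponentials parametrizing the simplices. After multiplying through by all $n$ denominators, the paper isolates the coefficient of $x_n^{n-1}x_j$ (note $x_n$ occurs only via denominators), which for $j=1$ equals $\sum_i \alpha_{i,1}\,p_{i,1}\prod_{k\ne i}p_{k,n}$, and shows---using exactly $S_{y,x_1}\ne S_{y,x_2}$ and $S_{y,x}\ne\emptyset$---that each summand contributes a distinct $e$-monomial. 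Hence the numerator polynomial is not identically zero whenever $\alpha\ne 0$, which is the certificate your specialization was meant to supply but did not.
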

Theorem \ref{thm:main} implies that the causal direction is identifiable, when the exogenous variable has cardinality $<n(n-1)$:
\begin{corollary}
\label{cor:entropyalgorithm}
Assume that there exists an algorithm $\mathcal{A}$ that given $n$ random variables $\{Z_i\},i\in[n]$ with distributions $\{p_i\},i\in[n]$  each with $n$ states, outputs the distribution of the random variable $E$ with minimum cardinality and functions $\{f_i,i\in[n]\}$ where $Z_i = f_i(E)$. 

Consider the causal pair $X\rightarrow Y$ where $Y = f(X,E_0)$. Assume that the cardinality of $E_0$ is less than $n(n-1)$, and $f$ is generic. Then, $\mathcal{A}$ can be used to identify the true causal direction with probability 1, if $X,E_0$ are selected uniformly randomly from the proper dimensional simplices.
\end{corollary}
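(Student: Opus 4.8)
The plan is to turn the causal-direction decision into two calls of $\mathcal{A}$, one per candidate direction, and to separate the two outcomes using the upper bound of Lemma~\ref{lem:decompUpper} on the true side against the lower bound of Theorem~\ref{thm:main} on the wrong side. Throughout I take $|\mathcal{X}|=|\mathcal{Y}|=n$, as in Theorem~\ref{thm:main}.

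First I would record that, by Lemma~\ref{lem:characterization}, a causal model $Y=f(X,E)$ with $E\ci X$ inducing $p(x,y)$ and $\card(\mathcal{E})=m$ exists if and only if the $n$ conditional columns $\{p(y|x=j)\}_{j\in[n]}$ of $\mat{Y|X}$ share a common ``source'': a single distribution $\mat{e}$ on $[m]$ together with maps $f_j=f(j,\cdot)$ with $f_j(E)\sim p(y|x=j)$ (the level sets $f_j^{-1}(\cdot)$ assemble into the block partition matrix $\mat{M}$ with $vec(\mat{Y|X})=\mat{Me}$, and conversely; $p_X$ then fills in the joint). This is precisely the task solved by $\mathcal{A}$ on input $\{p(y|x=j)\}_{j}$, each a distribution over the $n$ states of $\mathcal{Y}$. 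Hence $\mathcal{A}$ applied to these conditionals returns the minimum cardinality $m^{\to}$ of an exogenous variable consistent with $X\rightarrow Y$; symmetrically, $\mathcal{A}$ applied to $\{p(x|y=k)\}_{k\in[n]}$ returns the minimum cardinality $m^{\leftarrow}$ consistent with $Y\rightarrow X$, which is finite by Lemma~\ref{lem:decompUpper}.

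It then remains to bound the two numbers. Since $E_0$ is feasible for the true direction and $\card(E_0)<n(n-1)$ by hypothesis, minimality gives $m^{\to}\le\card(E_0)<n(n-1)$. For the reverse direction the hypotheses of Theorem~\ref{thm:main} hold ($f$ generic; $X$ and $E_0$ drawn uniformly from the $n-1$ and $\theta-1$ simplices), so with probability $1$ every $\tilde{E}\ci Y$ admitting $X=g(Y,\tilde{E})$ has cardinality at least $n(n-1)$, whence $m^{\leftarrow}\ge n(n-1)$ almost surely. Combining, $m^{\to}<n(n-1)\le m^{\leftarrow}$ with probability $1$, so the rule ``output the direction whose minimum-cardinality exogenous variable is strictly smaller'' returns $X\rightarrow Y$ almost surely; the strict inequality $\card(E_0)<n(n-1)$ is exactly what rules out a tie.

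I do not anticipate a real obstacle once Lemma~\ref{lem:characterization} and Theorem~\ref{thm:main} are in hand; the argument is essentially bookkeeping. The one point needing care is the reduction in the second paragraph: checking that the maps and the $E$-distribution output by $\mathcal{A}$ genuinely correspond to a block partition decomposition in \emph{both} directions, so that $\mathcal{A}$ truly computes $m^{\to}$ and $m^{\leftarrow}$ rather than some relaxation, and confirming that the measure-zero exceptional set in Theorem~\ref{thm:main} is the only source of randomness, so that the conclusion holds with probability exactly $1$.
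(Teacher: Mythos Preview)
Your proposal is correct and follows essentially the same approach as the paper's proof: feed the conditional families $\{p(y\mid x=j)\}_j$ and $\{p(x\mid y=k)\}_k$ to $\mathcal{A}$, bound the forward output above by $\card(E_0)<n(n-1)$ via feasibility of $E_0$, bound the reverse output below by $n(n-1)$ via Theorem~\ref{thm:main}, and compare. Your discussion of Lemma~\ref{lem:characterization} to justify that $\mathcal{A}$ really computes the minimum exogenous cardinality, and your remark that the strict inequality on $\card(E_0)$ rules out a tie, are more explicit than the paper's terse argument, but the underlying logic is identical.
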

\begin{proof}
Feed the set of conditional distributions $\{\pr{Y|X=i}:i\in[n]\}$ and $\{\pr{X|Y=i}:i\in[n]\}$ to  $\mathcal{A}$ to obtain $E$, $\tilde{E}$. From Theorem \ref{thm:main}, with probability 1, $\mathcal{A}$ identifies $\tilde{E}$ with $\card(\tilde{E})\geq n(n-1)$. Then since $\card(E)\leq \card(E_0)<\card(\tilde{E})$, comparing cardinalities give the true direction.
\end{proof}

Corollary \ref{cor:entropyalgorithm} gives an algorithm for finding the true causal direction: Estimate $E,\tilde{E}$ with minimum $H_0$ entropy and declare $X\rightarrow Y$ if $|\tilde{E}|>|E|$ and declare $X\leftarrow Y$ if $|\tilde{E}|<|E|$. The result easily extends to the case where $X$ and $Y$ are allowed to have different number of states:
\begin{proposition}[\bf Inference algorithm]
Suppose $X\rightarrow Y$. Let $X\in\mathcal{X},Y\in \mathcal{Y}$, $|\mathcal{X}|=n,|\mathcal{Y}|=m$. Assume that $\mathcal{A}$ is the algorithm that finds the exogenous variables $E,\tilde{E}$ with minimum cardinality. Then, if the underlying exogenous variable $E_0$ satisfies $|E_0|<n(m-1)$, with probability 1, we have $|X|+|E|<|Y|+|\tilde{E}|$. 
\end{proposition}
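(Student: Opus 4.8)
The plan is to sandwich the two quantities $|X|+|E| = n+|E|$ and $|Y|+|\tilde E| = m+|\tilde E|$ on opposite sides of the common value $nm$. Concretely, I would show $n+|E| < nm$ holds deterministically, and $m+|\tilde E| \ge nm$ holds with probability $1$; since $nm = mn$, combining these gives $|X|+|E| < |Y|+|\tilde E|$ immediately.

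For the deterministic upper bound, observe that the true model $Y=f(X,E_0)$ with $E_0\ci X$ induces the conditional matrix $\mat{Y|X}$, so by Lemma \ref{lem:characterization} there is a block partition matrix with $|E_0|$ columns together with a compatible probability vector decomposing $vec(\mat{Y|X})$. Hence the minimum-cardinality exogenous variable $E$ returned by $\mathcal{A}$ in the $X\to Y$ direction satisfies $|E|\le|E_0|$, and the hypothesis $|E_0|<n(m-1)$ gives $n+|E|\le n+|E_0| < n+n(m-1) = nm$.

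The substantive step is the lower bound $|\tilde E|\ge m(n-1)$ on the minimum-cardinality exogenous variable in the wrong direction $Y\to X$, which is exactly the rectangular generalization of Theorem \ref{thm:main}. Applying that theorem with roles reassigned --- ``cause'' $=Y$ (with $m$ states), ``effect'' $=X$ (with $n$ states) --- the matrix to be decomposed is $\mat{X|Y}$, which is $n\times m$ and whose $m$ columns are points of the $(n-1)$-simplex, so the square-case count $n\cdot(n-1)$ becomes (number of columns)$\,\times\,$(simplex dimension) $=m(n-1)$. I would carry this out by re-running the proof of Theorem \ref{thm:main} essentially verbatim: the block-partition characterization (Lemma \ref{lem:characterization}), the definition of a generic $f$, and the argument that the set of model parameters $(p_X,p_{E_0})$ for which $vec(\mat{X|Y})$ admits a decomposition $\mat{M}\tilde{\mat{e}}$ with $\mat{M}$ having fewer than $m(n-1)$ columns is a finite union of proper (Lebesgue-null) algebraic subvarieties --- this never uses $|\mathcal{X}|=|\mathcal{Y}|$, only that each column of the explained matrix lies in a simplex of the stated dimension and that genericity of $f$ keeps the parameter-to-$\mat{X|Y}$ map off every low-dimensional affine subspace of block-partition type. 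The one thing worth double-checking when transcribing the proof is precisely that last genericity step is insensitive to the rectangular shape, which I expect to be the case.

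Granting the generalization, the argument closes by arithmetic: with probability $1$, $|Y|+|\tilde E| = m+|\tilde E| \ge m+m(n-1) = mn = nm > n+|E| = |X|+|E|$. I expect the only real obstacle to be confirming the rectangular version of Theorem \ref{thm:main}; once that is in hand, the balancing of the two bounds around $nm$ is routine (and, for $n=m$, specializes to Corollary \ref{cor:entropyalgorithm}).
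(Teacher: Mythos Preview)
Your proposal is correct and follows essentially the same approach as the paper, whose proof is the one-line sketch ``follows from Corollary~\ref{cor:entropyalgorithm}, and by extending the proof of Theorem~\ref{thm:main} to different cardinalities for $X$, $Y$.'' Your sandwiching of the two sides around $nm$ is exactly the arithmetic the paper leaves implicit, and your identification of the rectangular bound $|\tilde E|\ge m(n-1)$ (columns of $\mat{X|Y}$ times simplex dimension) is the correct generalization of Theorem~\ref{thm:main}; the caveat you flag about checking that the genericity/polynomial-nonvanishing step in the appendix proof survives the rectangular shape is well placed, but the argument there indeed only uses that each block of $vec(\mat{X|Y})$ has one redundant coordinate, not that the number of blocks equals the block size.
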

Proof follows from Corollary \ref{cor:entropyalgorithm}, and by extending the proof of Theorem \ref{thm:main} to different cardinalities for $X$, $Y$.

Unfortunately, it turns out there does not exist an efficient algorithm $\mathcal{A}$, unless $\textsc{P=NP}$:
\begin{theorem}
\label{thm:cardNPhard}
Given a conditional distribution matrix $\mat{Y|X}$, identifying $E\ci X$ with minimum support size such that there exist a function $f$ with $Y = f(X,E)$ is $\NP$ hard.
\end{theorem}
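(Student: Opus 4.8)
\textbf{Reformulation via Lemma~\ref{lem:characterization}.}
Finding the minimum‑cardinality $E\ci X$ with $Y=f(X,E)$ is, by Lemma~\ref{lem:characterization}, the same as finding the smallest $m$ admitting a block‑partition matrix $\mat{M}\in\mathcal{C}$ and a probability vector $\mat{e}\in\mathbb{R}_+^m$ with $vec(\mat{Y|X})=\mat{M}\mat{e}$. Each block of $\mat{M}$ has exactly one $1$ per column (the sets $S_{\cdot,j}$ partition $[m]$), so it is the $0/1$ incidence matrix of a map $g_j:[m]\to[|\mathcal{Y}|]$, and the equation says $g_j$ pushes $\mat{e}$ forward onto the $j$‑th column of $\mat{Y|X}$, i.e.\ onto $p(\cdot\mid X=j)$. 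Hence the problem is exactly: \emph{given the $|\mathcal{X}|$ columns of $\mat{Y|X}$ as distributions on $[|\mathcal{Y}|]$, find a coupling of them (a joint distribution with these marginals) of minimum support.} I would prove this minimum‑support coupling problem is $\NP$‑hard, which is what is needed.

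\textbf{A clean special case.}
It suffices to handle marginals of support $2$. Fix a distinguished symbol $0$ of the $Y$‑alphabet and, for coordinate $j$, take the marginal $(1-\beta_j)\delta_0+\beta_j\delta_j$ with a private symbol $j$. Then an atom of any coupling is determined by the set $U$ of coordinates in which it takes its private value, so a coupling of support $B$ is a choice of subsets $U_1,\dots,U_B$ with positive weights $\lambda_1,\dots,\lambda_B$ summing to $1$ such that $\sum_{i:\,j\in U_i}\lambda_i=\beta_j$ for every $j$. Thus ``is there a coupling of support $\le B$?'' becomes exactly ``does $\beta=(\beta_j)_j$ lie in the convex hull of at most $B$ vertices of the Boolean cube?'', i.e.\ ``is $\beta=\sum_{i=1}^{B}\lambda_i\mat{1}_{U_i}$ for some $\lambda$ on the simplex?''. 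So I only need this hypercube question to be hard.

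\textbf{The reduction.}
I would reduce from \textsc{Subset Sum}: integers $a_1,\dots,a_k>0$ with sum $S$ and target $t$; is there $I\subseteq[k]$ with $\sum_{i\in I}a_i=t$? In polynomial time build $\beta$ (the columns of $\mat{Y|X}$) with $O(k)$ coordinates of two kinds: \emph{item} coordinates whose values $\beta_j$ are built from the $a_i$ (e.g.\ $\beta_i=a_i/S$, plus control coordinates) so that in any representation of size $B=k+O(1)$ the weights $\lambda$ are forced to be proportional to $(a_1,\dots,a_k)$ and each item's mass must come from a single dedicated atom; and \emph{accounting} coordinates with values $\propto t$, so that some $U_i$ must pick out a sub‑collection of the item weights of total mass $\propto t$. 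Completeness is immediate: from a valid $I$, build one atom per item (with its accounting coordinates set according to $i\in I$ or not), weights $a_i/S$ which already sum to $1$; this is a coupling of support $\le B$ realizing all $\beta_j$.

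\textbf{The crux.}
Soundness is the real work: one must show that \emph{every} coupling of support $\le B$ certifies a genuine $I$ with $\sum_{i\in I}a_i=t$, even though the solver may choose the $U_i$ adversarially and could a priori hit all the prescribed marginal values via compensating ``dishonest'' splittings of mass across coordinates. The standard remedy is to make the encoding numerically generic --- choose the $a_i$ (or the control‑coordinate targets) so that distinct sub‑collections have distinct sums and a sum reveals its sub‑collection --- and then argue that with only $k+O(1)$ atoms there is no room for a non‑honest representation, after which the accounting coordinates pin the subset sum to $t$. Making this separation argument quantitative (how large the offsets must be, and ruling out the finitely many degenerate alignments) is the main obstacle; the rest is bookkeeping. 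An alternative route is to adapt the $\NP$‑hardness construction for minimum \emph{Shannon}‑entropy coupling of \cite{Kovacevic2012}, which is numerically of the same flavor, to the support‑size ($H_0$) objective.
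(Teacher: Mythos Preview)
Your reformulation via Lemma~\ref{lem:characterization} as a minimum-support coupling problem is correct, and the specialization to two-point marginals with a private symbol per coordinate cleanly turns the question into ``can $\beta$ be written as a convex combination of at most $B$ vertices of $\{0,1\}^k$?''.

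But the proposal is not a proof: you yourself flag soundness as ``the main obstacle'' and offer only a programme (make the encoding numerically generic, argue there is no room for dishonest splittings). This is a genuine gap. In your construction the solver picks both the sets $U_i$ and the weights $\lambda_i$; numerical genericity of the $a_i$ controls subset-sum coincidences among the $a_i$, not among arbitrary $\lambda_i$. Nothing you have written forces $\lambda_i\propto a_i$, and the unspecified ``control coordinates'' that are supposed to do this are precisely the missing content. Until those coordinates are exhibited and the threshold $B=k+O(1)$ is shown tight in both directions, there is no reduction.

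The paper sidesteps this by going the opposite way: only \emph{two} columns ($|\mathcal{X}|=2$), with the first column carrying $m$ distinct nonzero entries $u_1,\dots,u_m$ and the second column carrying just two nonzeros, $a$ and $\sum_i u_i-a$ (normalized). The first column alone forces any decomposition to have at least $m$ terms, and Lemma~\ref{lem:decompUpper} gives $m+1$ always; so the question is $m$ versus $m+1$. With exactly $m$ terms, each must hit a distinct nonzero of column one, so the weights are \emph{forced} to be the $u_i$; then the second-column entry $a$ is a subset sum of those weights iff the decomposition exists. Both directions are one line. If you want to salvage your hypercube framing, the right idea to import is this ``one column pins all the weights'' trick rather than a battery of control coordinates.
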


The hardness of this problem sets us to search for alternative approaches.

\section{Causal Model with Minimum \texorpdfstring{$H_1$}{} Entropy}
In this section, we propose a way to identify the causal model that explains the observational data with minimum Shannon entropy ( entropy in short ). Entropy of a causal model is measured by the number of random bits required to generate its input. In the causal graph $X\rightarrow Y$, where $Y = f(X,E)$, we identify the exogenous variable $E\ci X$ with minimum entropy. We show that this corresponds to a known problem which has been shown to be $\NP$ hard. Later we propose a greedy algorithm.

Notice that $H(E)$ is different from the conditional entropy $H(Y|X)$. 
Certainly, since $Y=f(X,E)$, $H(Y|X)\leq H(E)$. 
The key is that since $E$ is forced to be independent from $X$, $H(E)$ cannot be lowered to $H(Y|X)$.
To see this, we can write $H(Y|X) = \sum_i p_X(i)H(Y|X=i)$, whereas since conditional probability distribution of $Y|X=i$ is the same as the distribution of $f_i(E)$ for some function $f_i$, we have $H(E)\geq \max_i H(Y|X=i)$. 

\subsection{Finding {\itshape{\mdseries  E}} with minimum entropy}
Consider the equation $Y = f(X,E),X\ci E$. 
Let $f_x:\mathcal{E}\rightarrow \mathcal{Y} $ be the function mapping $E$ to $Y$ when $X=x$, i.e., $f_x(E) \coloneqq f(x,E)$. Then $\pr{Y=y|X=x} = \pr{f_x(E) = y|X = x} = \pr{f_x(E)=y}$. The last equality follows from the fact that $X\ci E$. Thus, we can treat the conditional distributions $\pr{Y|X=x}$ as distributions that emerge by applying some function $f_x$ to some unobserved variable $E$. Then the problem of identifying $E$ with minimum entropy given the joint distribution $p(x,y)$ becomes equivalent to, given distributions of the variables $f_i(E)$, finding the distribution with minimum entropy (distribution of $E$), such that there exists functions $f_i$ which map this distribution to the observed distributions of $Y|X=i$. It can be shown that $H(E)\geq H(f_1(E),f_2(E),\hdots,f_n(E))$. Regarding $f_i(E)$ as a random variable $U_i$, the best lower bound on $H(E)$ can be obtained by minimizing $H(U_1,U_2,\hdots,U_n)$. We can show that we can always construct an $E$ that acheives this minimum. Thus the problem of finding the exogenous variable $E$ with minimum entropy given the joint distribution $p(x,y)$ is equivalent to the problem of finding the minimum entropy joint distribution of the random variables $U_i = (Y|X = i)$, given the marginal distributions $p(Y|X=i)$:

\begin{theorem}[\textbf{Minimum Entropy Causal Model}]
\label{thm:equivalencetoentropy}
Assume that there exists an algorithm $\mathcal{A}$ that given $n$ random variables $\{Z_i\},i\in[n]$ with distributions $\{p_i\},i\in[n]$ each with $n$ states, outputs the joint distribution over $Z_i$ consistent with the given marginals, with minimum entropy.

Then, $\mathcal{A}$ can be used to find the causal model $\mathcal{M} =  (\{X,Y\},E,X\rightarrow Y, p_{X,E})$ with minimum input entropy, given any joint distribution $p_{X,Y}$. 
\end{theorem}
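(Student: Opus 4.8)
The plan is to show that minimizing the input entropy of a causal model for $p_{X,Y}$ reduces exactly to the minimum entropy coupling problem solved by $\mathcal{A}$. Since the joint distribution fixes the marginal $p_X$, and hence $H(X)$, it suffices to match $\min H(E)$ over all valid causal models inducing $p_{X,Y}$ with $\min H(U_1,\dots,U_n)$ over all couplings of the conditionals $\{p(Y\mid X = i)\}_{i\in[n]}$ (these conditionals are read off from $p_{X,Y}$, discarding any state $i$ with $p_X(i)=0$). I would prove two matching inequalities and then glue them together.

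For the lower bound, take any causal model $\mathcal{M} = (\{X,Y\}, E, f, X\rightarrow Y, p_{X,E})$ inducing $p_{X,Y}$ and set $U_i \coloneqq f_i(E) = f(i,E)$. Independence $E\ci X$ gives $\pr{U_i = y} = \pr{f_i(E) = y} = \pr{Y = y\mid X = i}$, so $(U_1,\dots,U_n)$ is a coupling of the prescribed marginals; and since it is a deterministic image of $E$, the elementary inequality $H(g(E)) \le H(E)$ yields $H(E) \ge H(U_1,\dots,U_n) \ge \min_{\text{couplings}} H(U_1,\dots,U_n)$. This is the inequality $H(E)\ge H(f_1(E),\dots,f_n(E))$ already invoked in the text.

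For achievability --- the step carrying the actual content --- let $q$ be any coupling of $\{p(Y\mid X=i)\}_{i\in[n]}$, in particular the minimum entropy one returned by $\mathcal{A}$. Define $E$ to range over $\operatorname{supp}(q)\subseteq \mathcal{Y}^n$ with $\pr{E = (y_1,\dots,y_n)} = q(y_1,\dots,y_n)$, declared independent of $X$, and let $f(i,\cdot)$ be projection onto the $i$-th coordinate. Then $f_i(E)$ has distribution $p(Y\mid X=i)$, so $\mathcal{M}' = (\{X,Y\}, E, f, X\rightarrow Y, p_X\times p_E)$ induces exactly $p_{X,Y}$, while $H(E) = H(q) = H(U_1,\dots,U_n)$. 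Minimizing over $q$, this construction achieves $\min_{\text{couplings}} H(U_1,\dots,U_n)$, matching the lower bound; adding the fixed $H(X)$ then gives the minimum input entropy. Hence feeding $\{p(Y\mid X=i)\}_{i\in[n]}$ to $\mathcal{A}$ and applying this conversion outputs a causal model of minimum input entropy.

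The only real obstacle is the bookkeeping in the achievability construction: one must verify that the projection model simultaneously satisfies the independence constraint $E\ci X$ and reproduces each conditional $p(Y\mid X=i)$ exactly (rather than merely the aggregate $p_{X,Y}$), and one must be sure the lower bound is not circumvented by allowing $E$ to have very large --- or even countably infinite --- support, which is precisely what the deterministic-function entropy inequality rules out. Beyond that the theorem is a dictionary between causal models and couplings, with no analytic difficulty; existence of the minimum coupling itself is immediate since couplings of finitely supported distributions form a compact polytope on which entropy is continuous.
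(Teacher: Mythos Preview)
Your proposal is correct and follows essentially the same route as the paper: both establish the lower bound $H(E)\ge H(f_1(E),\dots,f_n(E))$ via the data-processing inequality for deterministic maps, and both achieve it by taking $E$ to be the minimum-entropy coupling itself with $f(i,\cdot)$ the $i$th coordinate projection. Your write-up is in fact slightly crisper about the achievability construction (explicitly naming $f$ as projection and noting compactness for existence of the minimum), whereas the paper spends some space on a sigma-algebra digression; but the argument is the same.
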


The problem of minimizing entropy subject to marginal constraints is non-convex. In fact, it is shown in \cite{Kovacevic2012} that minimizing the joint entropy of a set of variables given their marginals is $\NP$ hard. Thus we have the following corollary:

\begin{corollary}
\label{cor:entropyNPhard}
Finding the causal model $\mathcal{M} = (\{X,Y\}, E, f, X\rightarrow Y, p_{E,X})$ with minimum $H(E)$ that induce a given distribution $p(x,y)$ is $\NP$ hard.
\end{corollary}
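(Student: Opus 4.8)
The plan is to derive this as a direct consequence of Theorem~\ref{thm:equivalencetoentropy} together with the NP-hardness of the minimum entropy coupling problem established in~\cite{Kovacevic2012}. The one point requiring care is the \emph{direction} of the reduction: Theorem~\ref{thm:equivalencetoentropy} shows that an oracle for minimum entropy coupling solves the minimum-$H(E)$ causal problem, whereas to conclude NP-hardness of the causal problem we must exhibit the reverse reduction, i.e., encode an arbitrary minimum-entropy-coupling instance as an instance of finding a minimum-$H(E)$ causal model.

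First I would recall the correspondence built in the proof of Theorem~\ref{thm:equivalencetoentropy}: in the direction $X\rightarrow Y$, the conditional distributions $\mathbf{p}(Y\mid X=i)$ are exactly the marginals of the variables $U_i=f_i(E)$, the bound $H(E)\ge H(U_1,\dots,U_n)$ is immediate, and the proof constructs an $E$ attaining it. Hence $\min\{H(E): E\ci X,\ Y=f(X,E)\text{ induces }p(x,y)\}$ equals the minimum of $H(U_1,\dots,U_n)$ over all couplings of the marginals $\mathbf{p}(Y\mid X=i)$.

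Next, given an arbitrary instance of minimum entropy coupling --- distributions $p_1,\dots,p_n$, which after zero-padding we may assume are supported on a common set $[m]$ --- I would construct in polynomial time a joint distribution $p(x,y)$ with $|\mathcal{X}|=n$, $|\mathcal{Y}|=m$ by declaring $\mathbf{p}(Y\mid X=i)=p_i$ and taking $p(X)$ to be any fixed full-support distribution (e.g.\ uniform); the choice of $p(X)$ does not affect the feasible set for $E$, since only the conditionals constrain the coupling problem. By the correspondence above, the minimum $H(E)$ over causal models $X\rightarrow Y$ inducing this $p(x,y)$ equals the minimum entropy of a coupling of $p_1,\dots,p_n$, and reading off the constructed $E$ recovers an optimal coupling. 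Since the latter problem is NP-hard by~\cite{Kovacevic2012}, so is the former.

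The main obstacle is purely bookkeeping at the interface of the two problems: checking that the reduction is polynomial and in the hardness-preserving direction, that zero-padding the marginals leaves the optimal coupling entropy unchanged, and that the instances our construction produces cover the family for which~\cite{Kovacevic2012} proves hardness (in particular allowing $|\mathcal{X}|\ne|\mathcal{Y}|$, which is handled by the extension noted after Corollary~\ref{cor:entropyalgorithm}). No analytic content beyond Theorem~\ref{thm:equivalencetoentropy} is required.
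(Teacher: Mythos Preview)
Your proposal is correct and follows essentially the same approach as the paper: encode an arbitrary minimum-entropy-coupling instance as the columns of a conditional matrix $\mat{Y|X}$ (with an arbitrary full-support $p(X)$), then invoke the equivalence established in the proof of Theorem~\ref{thm:equivalencetoentropy} to conclude that an oracle for the minimum-$H(E)$ causal model solves the coupling problem. Your added care about the direction of the reduction and the bookkeeping details (zero-padding, choice of $p(X)$) is appropriate and makes explicit what the paper leaves implicit.
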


For this, we propose a greedy algorithm. Using entropy to identify $E$ instead of cardinality, despite both turning out to be NP hard, is useful since entropy is more robust to noise in data. In real data, we estimate the probability values from samples, and noise is unavoidable.

\subsection{A Conjecture on Identifiability with \texorpdfstring{$H_1$}{} Entropy}
We have the following conjecture, supported by artificial and real data experiments in Section \ref{sec:simulations}.

\begin{conjecture}
\label{conj:entropy}
Consider the causal model $\mathcal{M}=(\{X,Y\},E,f,X\rightarrow Y,p_{X,E})$ where discrete random variables $X,Y$ have $n$ states, $E\ci X$ has $\theta$ states. 

If the distribution of $X$ is uniformly randomly selected from the $n-1$ dimensional simplex and distribution of $E$ is uniformly selected from the probability distributions that satisfy  $H_1(E)\leq \log{n}+\mathcal{O}(1)$ and $f$ is randomly selected from all functions $f:[n]\times [\theta]\rightarrow [n]$, then with high probability, any $\tilde{E}\ci Y$ that satisfies $X = g(Y,\tilde{E})$ for some deterministic $g$ entails $H(X)+H(E)<H(Y)+H(\tilde{E})$.
\end{conjecture}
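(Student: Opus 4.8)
The plan is to mimic the two‑sided argument behind Theorem~\ref{thm:main} and Corollary~\ref{cor:entropyalgorithm}, replacing the $H_0$ (cardinality) bookkeeping with $H_1$ (Shannon) bookkeeping. On the forward side everything is immediate: since $X$ has $n$ states and $H(E)\le \log n + \mathcal{O}(1)$ by hypothesis, $H(X)+H(E)\le 2\log n + \mathcal{O}(1)$. The whole content is therefore to show that, with high probability over the random choice of $X$, $E$ and $f$, every valid reverse model has $H(Y)+H(\tilde E) > 2\log n + \mathcal{O}(1)$; in fact I would aim for the stronger statement $H(Y)+H(\tilde E) = (3-o(1))\log n$, which leaves comfortable room.

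First I would reduce the target inequality to a statement about a single ``excess'' quantity. Using the chain‑rule identity $H(X,Y)=H(X)+H(Y|X)=H(Y)+H(X|Y)$, the claim $H(X)+H(E)<H(Y)+H(\tilde E)$ is equivalent to
\[
H(\tilde E)-H(X|Y)\;>\;H(E)-H(Y|X).
\]
Both sides are nonnegative: the right side equals $\sum_x p(x)\bigl(H(E)-H(f_x(E))\bigr)\ge 0$, since each $f_x$ is a deterministic function of $E$; and $H(\tilde E)\ge H(X|Y)$ because $\tilde E\ci Y$ and $X=g(Y,\tilde E)$. So it suffices to (i) upper bound the forward deficit $H(E)-H(Y|X)$ and (ii) lower bound the reverse excess $H(\tilde E)-H(X|Y)$, both with high probability. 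For (i), for a uniformly random $f:[n]\times[\theta]\to[n]$ and a generic $E$, each map $f_x$ collapses $E$ by a controlled amount; a second‑moment estimate on $p(f_x(E)=y)=\sum_{e:\,f_x(e)=y}p(e)$ shows $H(f_x(E))\ge \min\{H(E),\log n\}-o(\log n)$ whp when $\theta\le n^{1+o(1)}$, so the forward deficit is $o(\log n)$ (a constant in the most relevant regime $\theta\approx n$).

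For (ii), I would invoke Theorem~\ref{thm:equivalencetoentropy}: the minimum $H(\tilde E)$ over reverse models equals the minimum‑entropy coupling of the $n$ conditional distributions $\{p(X|Y{=}y)\}_y$, which — after conditioning on the high‑probability genericity event used in Theorem~\ref{thm:main} — are essentially generic points of the $(n-1)$‑simplex, and whose minimum‑entropy coupling I claim is $(2-o(1))\log n$. Combined with $H(Y)=\log n-o(1)$ whp — which follows from near‑uniformity of $Y$, again by a second‑moment estimate on $p(Y{=}y)=\sum_{x,e:\,f(x,e)=y}p(x)p(e)$, using that $\max_{x,e}p(x)p(e)$ is $n^{-1-\Omega(1)}$ whp — this yields $H(Y)+H(\tilde E)=(3-o(1))\log n$ and closes the argument.

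The hard part is the minimum‑entropy‑coupling lower bound in (ii). Note the $H_0$ bound of Theorem~\ref{thm:main}, $\card(\tilde E)\ge n(n-1)$, points the wrong way, since $H_1\le H_0$; one must instead rule out that a minimum‑entropy coupling concentrates its mass on $\mathcal{O}(n)$ atoms of size $\approx 1/n$ (which would give only $\approx\log n$ and could break the inequality). The obstruction is genuinely combinatorial‑geometric: an atom of size $\approx 1/n$ at a tuple $(x_1,\dots,x_n)$ requires $p(X{=}x_i|Y{=}i)\gtrsim 1/n$ for \emph{all} $i$ simultaneously, and because the conditionals are mutually generic the feasible tuples together with the marginal‑matching constraints should force the coupling onto $\omega(n)$ atoms whose masses decay fast enough that the entropy is $(2-o(1))\log n$. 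Making this precise — essentially a quantitative claim that $n$ generic points of the simplex admit no low‑entropy joint coupling — is the crux and is what keeps the statement a conjecture; the difficulty is compounded by having to track the $\mathcal{O}(1)$ terms in the borderline regime $\theta\approx n$, where $H(E)$ and the reverse excess are both $\log n+\Theta(1)$, so a first version of a theorem might only establish identifiability with a larger explicit slack in Assumption~\ref{ass:causal}.
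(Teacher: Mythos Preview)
The paper does not prove this statement. It is explicitly labeled a \emph{conjecture} and is supported only by the synthetic and real-data experiments in Section~\ref{sec:simulations}; there is no proof in the paper to compare your proposal against. In that sense your write-up is not a deviation from the paper's argument but an attempt to supply one where the paper offers none.

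On the merits of your outline: the chain-rule reduction to $H(\tilde E)-H(X|Y)>H(E)-H(Y|X)$ is correct, and the nonnegativity of both sides is argued properly. Your forward-side estimates (small forward deficit, near-uniform $Y$) are plausible heuristics for a random $f$ and would likely go through with standard concentration arguments. But you have correctly located the actual obstruction: step~(ii), the $(2-o(1))\log n$ lower bound on the minimum-entropy coupling of the reverse conditionals $\{p(X\mid Y{=}y)\}_y$. You note yourself that the $H_0$ result of Theorem~\ref{thm:main} gives nothing here (cardinality $\ge n(n-1)$ is compatible with entropy $\approx\log n$), and that one must rule out couplings concentrated on $\mathcal{O}(n)$ atoms. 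That is exactly the gap, and nothing in your proposal closes it --- the paragraph describing why ``generic'' conditionals should force $\omega(n)$ heavy atoms is an intuition, not an argument. Since this is precisely the step that keeps the statement a conjecture in the paper as well, your proposal should be read as a proof \emph{strategy} that isolates the right subproblem, not as a proof.
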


\begin{proposition}[Assuming Conjecture \ref{conj:entropy}]
\label{prop:entropybasedalgorithm}
Assume there exists an algorithm $\mathcal{A}$ that given $n$ random variables $\{Z_i\},i\in[n]$ with distributions $\{p_i\},i\in[n]$ each with $n$ states, outputs the distribution of the random variable $E$ with minimum entropy and functions $\{f_i\},i\in[n]$ where $Z_i = f_i(E)$. 

Consider the causal pair $X\rightarrow Y$ where $Y = f(X,E_0)$, and cardinality of $E_0$ is $cn$ for some constant $c$, and $f$ is selected randomly. Then, $\mathcal{A}$ can be used to identify the true causal direction with high probability, if $X,E_0$ are uniformly random samples from the proper dimensional simplices.
\end{proposition}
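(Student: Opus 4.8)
The plan is to imitate the proof of Corollary~\ref{cor:entropyalgorithm}, replacing the cardinality separation of Theorem~\ref{thm:main} by the entropy separation asserted in Conjecture~\ref{conj:entropy}. Run $\mathcal A$ twice: on the conditional distributions $\{\Pr[Y\mid X=i]:i\in[n]\}$ it returns a forward exogenous variable $E^\ast\ci X$ together with $f^\ast$ such that $Y=f^\ast(X,E^\ast)$ and $H(E^\ast)$ is minimal among all such realizations; on $\{\Pr[X\mid Y=i]:i\in[n]\}$ it returns a backward $\tilde E^\ast\ci Y$ with $X=g^\ast(Y,\tilde E^\ast)$ and $H(\tilde E^\ast)$ minimal (that $\mathcal A$'s output can indeed be read as such an exogenous variable is the content of Theorem~\ref{thm:equivalencetoentropy}). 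The decision rule is: declare $X\to Y$ iff $H(X)+H(E^\ast)<H(Y)+H(\tilde E^\ast)$. It remains to show this rule is correct with probability $1-o(1)$ over the random draw of $X$, $E_0$, and $f$.

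First, since $E_0$ is itself a valid forward exogenous variable, minimality gives $H(E^\ast)\le H(E_0)$. Next I check that our instance satisfies the hypotheses of Conjecture~\ref{conj:entropy} with $\theta=cn$: $X$ is uniform on $\Delta^{n-1}$ and $f$ is uniform over functions $[n]\times[\theta]\to[n]$ by assumption, and since $c=\mathcal O(1)$ every distribution on $\theta=cn$ symbols already obeys $H_1\le\log(cn)=\log n+\mathcal O(1)$, so drawing $E_0$ uniformly from the $(\theta-1)$-simplex is the same as drawing it uniformly from $\{p\in\Delta^{\theta-1}:H_1(p)\le\log n+\mathcal O(1)\}$. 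Hence Conjecture~\ref{conj:entropy} applies, and with high probability every backward $\tilde E\ci Y$ realizing $X=g(Y,\tilde E)$ satisfies $H(X)+H(E_0)<H(Y)+H(\tilde E)$.

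Applying this to $\tilde E=\tilde E^\ast$ and chaining with the minimality inequality gives $H(X)+H(E^\ast)\le H(X)+H(E_0)<H(Y)+H(\tilde E^\ast)$, so on this high-probability event the decision rule outputs the true direction $X\to Y$. This establishes the proposition, conditional on Conjecture~\ref{conj:entropy}.

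I do not anticipate a genuine obstacle beyond the conjecture itself. The one point that needs care is that $\mathcal A$ may return a forward variable $E^\ast$ different from the planted $E_0$, so Conjecture~\ref{conj:entropy} --- which is phrased in terms of the true $E_0$ --- cannot be invoked for $E^\ast$ directly; this is harmless because the conjecture lower-bounds $H(Y)+H(\tilde E)$ by $H(X)+H(E_0)$ for \emph{every} backward $\tilde E$, and passing from $H(E_0)$ to the smaller $H(E^\ast)$ only strengthens the conclusion. The other bookkeeping item is verifying that the $\mathcal O(1)$ slack in Conjecture~\ref{conj:entropy} is large enough to absorb $\log c$, which is precisely what ``constant $c$'' in the statement guarantees.
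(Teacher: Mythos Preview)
Your proposal is correct and follows essentially the same route as the paper's proof: run $\mathcal{A}$ on the columns of $\mat{Y|X}$ and $\mat{X|Y}$ to obtain $E^\ast,\tilde E^\ast$, use minimality to get $H(E^\ast)\le H(E_0)$, invoke Conjecture~\ref{conj:entropy} to bound $H(Y)+H(\tilde E^\ast)$ from below by $H(X)+H(E_0)$, and chain. Your write-up is in fact slightly more careful than the paper's, which does not explicitly verify that the entropy hypothesis $H_1(E_0)\le\log n+\mathcal O(1)$ of the conjecture is met; your observation that $|E_0|=cn$ forces this automatically is the right way to close that gap.
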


\subsection{Greedy Entropy Minimization Algorithm}
Given $m$ discrete random variables with $n$ states, we provide a heuristic algorithm to minimize their joint entropy given their marginal distributions. The main idea is the following: Each marginal probability constraint must be satisfied. For example, for the case of two variables with distributions $p_1,p_2$, $i$th row of joint distribution matrix should sum to $p_1(i)$. The contribution of a probability mass to the joint entropy only increases when probability mass is divided into smaller chunks: $-p_1(i)\log{p_1(i)}\leq -a\log{a}-b\log{b}$, when $p_1(i) = a+b,$ for $a,b\geq 0$. Thus, we try to keep large probability masses intact to assure that their contribution to the joint distribution is minimized.
 
 \begin{figure*}[t]
\begin{subfigure}[b]{0.33\linewidth}
\includegraphics[width=\linewidth]{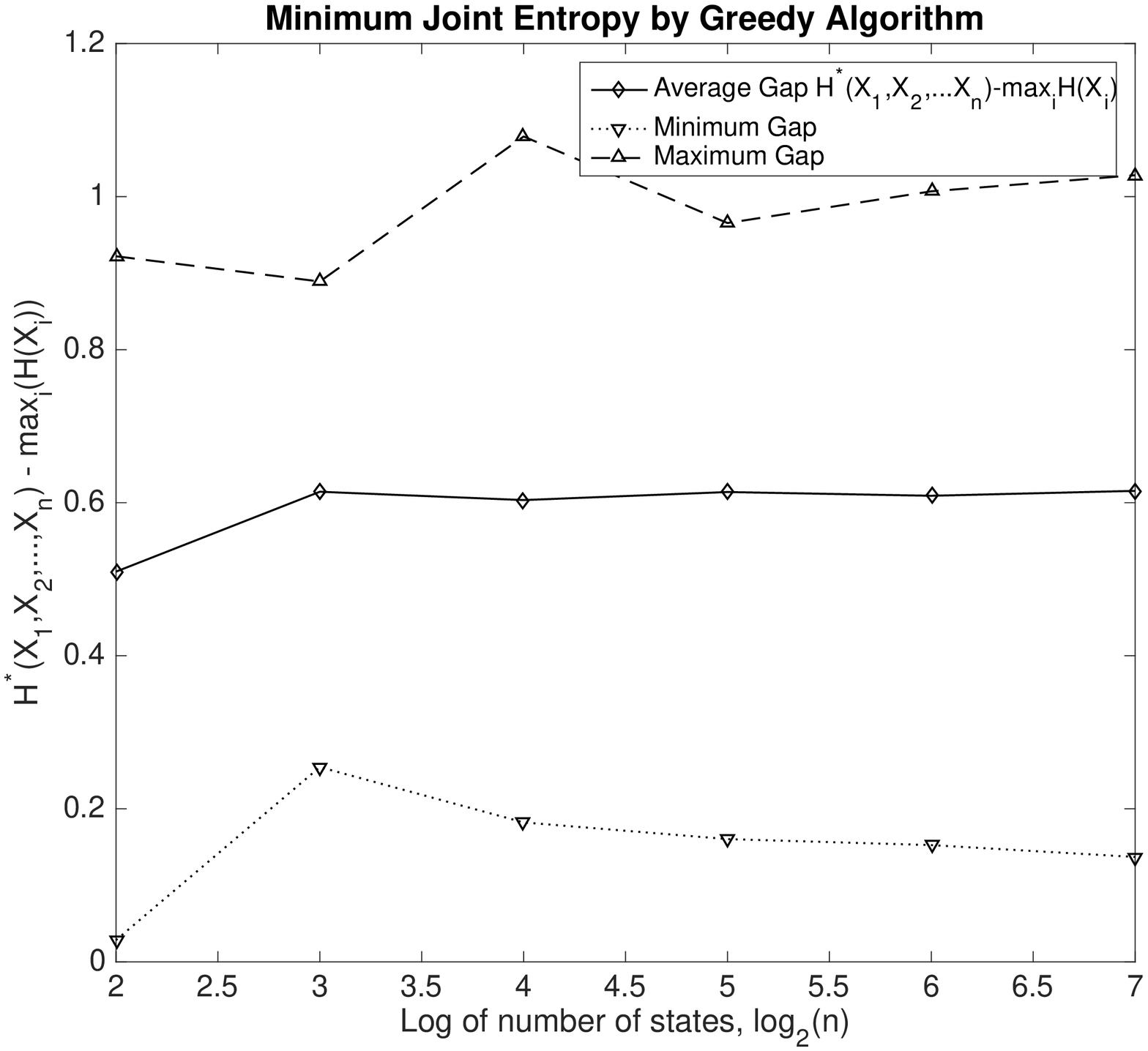}
\caption{Greedy Joint Entropy Min.}
\label{fig:greedyEntropyPerformance}
\end{subfigure}
\begin{subfigure}[b]{0.33 \linewidth}
\includegraphics[width=\linewidth]{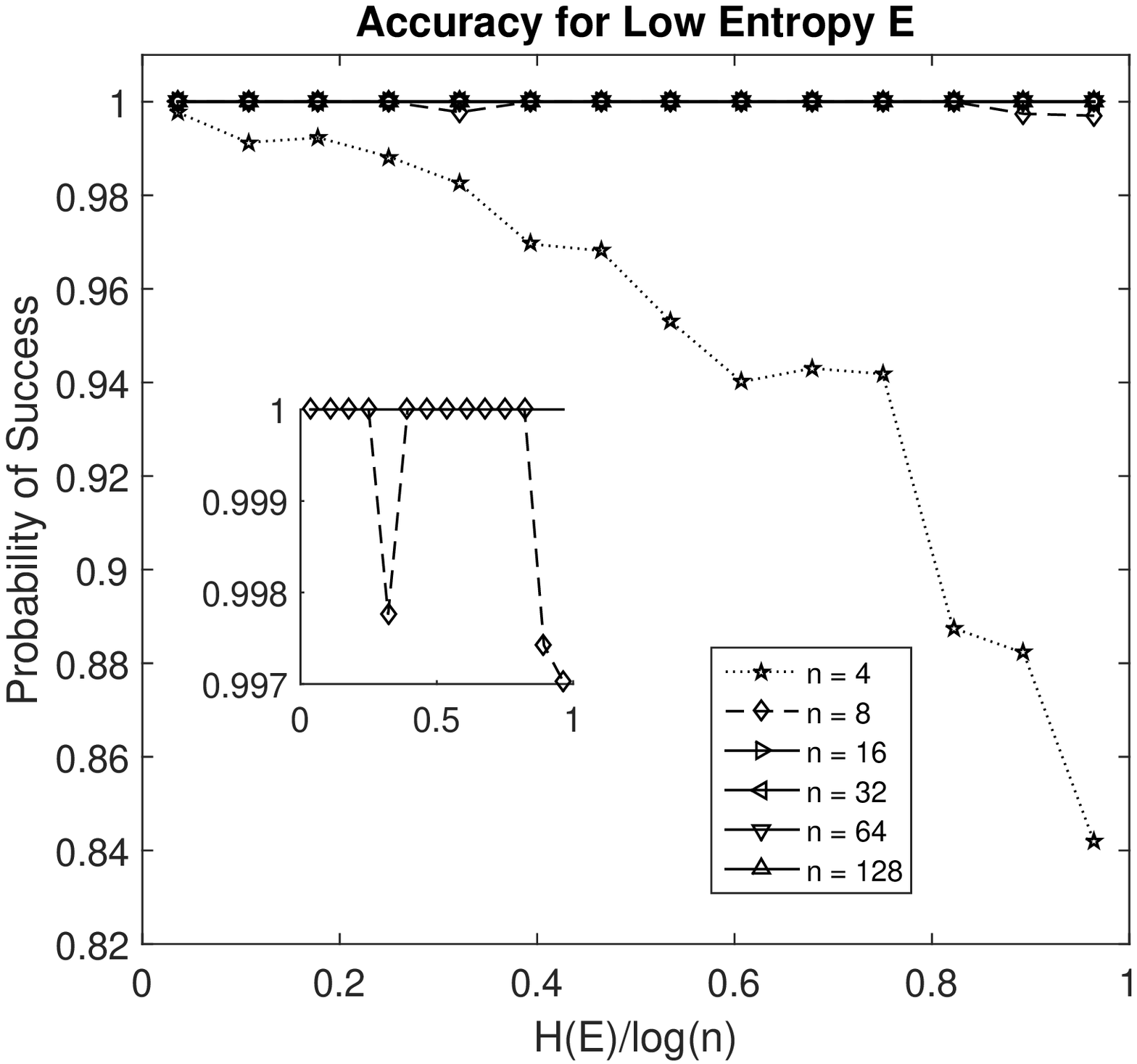}
\caption{Identifiability, Artificial Data}
\label{fig:empiricalidentifiability}
\end{subfigure}
\begin{subfigure}[b]{0.33 \linewidth}
\includegraphics[width=\linewidth]{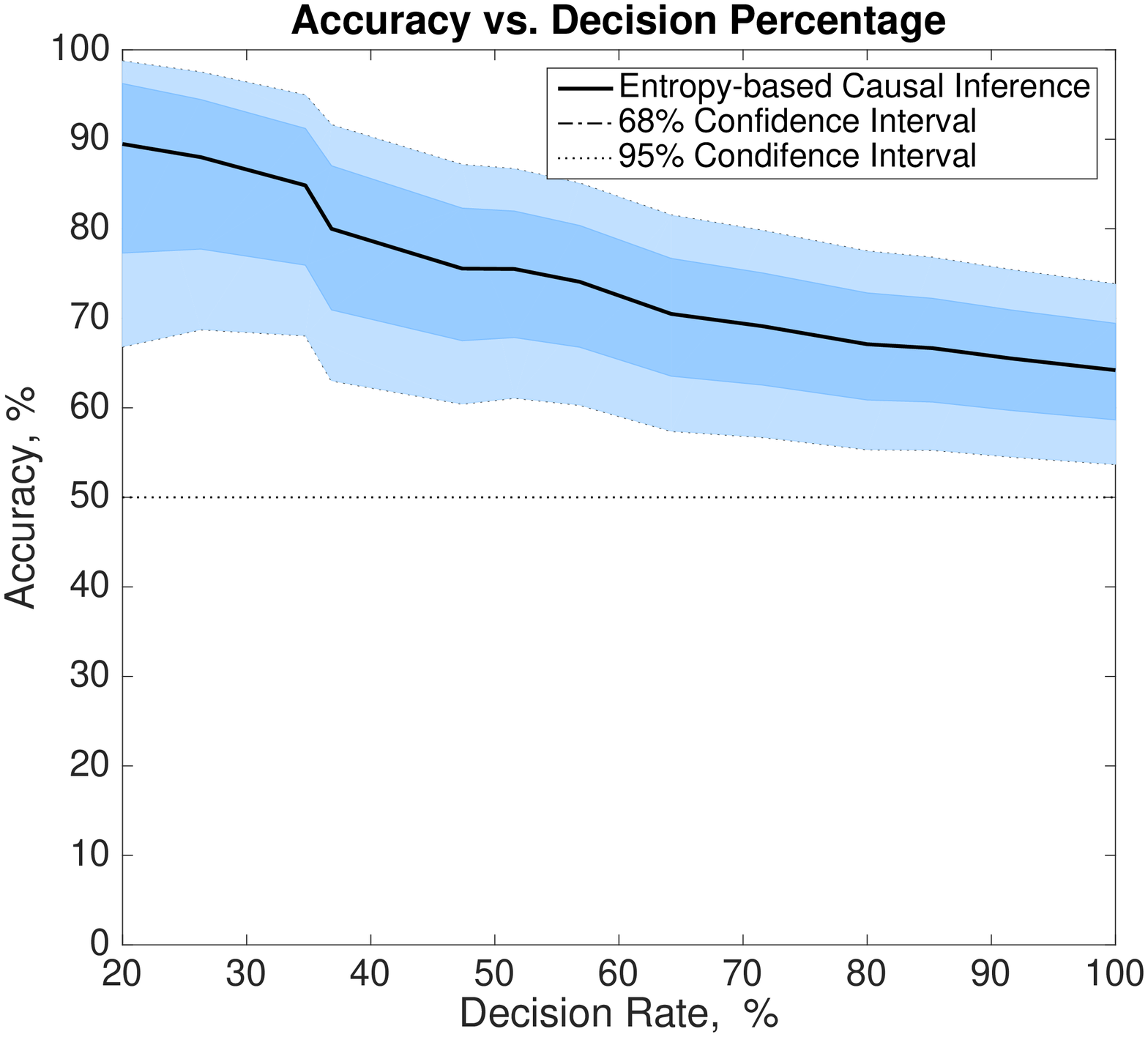}
\caption{Performance on Real Data }
\label{fig:realdata_accvsdecrate}
\end{subfigure}
\caption{\small (a) Performance of greedy joint entropy minimization algorithm: $n$ distributions each with $n$ states are randomly generated for each value of $n$. As can be seen, the minimum joint entropy obtained by the greedy algorithm is at most 1 bit away from the largest marginal $\max_iH(X_i)$. (b) Identifiability with Entropy: We generate distributions of $X,Y$ by randomly selecting $f,X,E$. Probability of success is the fraction of points where $H(X,E)<H(Y,\tilde{E})$. As observed, larger $n$ drives probability of success to 1 when $H(E)\leq \log{n}$, supporting Conjecture \ref{conj:entropy}. (c) Real Data Performance: Decision rate is the fraction of samples for which algorithm makes a decision for a causal direction. A decision is made when $|H(X,E)-H(Y,\tilde{E})| > t\log_2{n}$, where $t$ determines the decision rate. Confidence intervals are also provided.}
\end{figure*}
	
We propose Algorithm \ref{alg:heuristic}. The sorting step is only to simplify the presentation. Hence, although the given algorithm runs in time 
$\mathcal{O}(m^2n^2\log{n})$, it can easily be reduced to 
$\mathcal{O}(\max(mn\log{n},m^2n))$ by dropping the sorting step. 
The algorithm simply proceeds by removing the most probability mass it can at each round. This makes sure the large probability masses remain intact.
\begin{algorithm}[ht!]
\begin{small}
    \caption{Joint Entropy Minimization Algorithm}
   \label{alg:heuristic}
\begin{algorithmic}[1]
    \State {\bfseries Input:} Marginal distributions of $m$ variables each with $n$ states, in matrix form $\mat{M} = [p_1^T;p_2^T;...,p_m^T]$.
    \State $e = [ \hspace{0.1in}]$
    \State Sort each row of $M$ in decreasing order.
    \State Find minimum of maximum of each row: $r\leftarrow \min_i(p_i(1))$
     \While  {$r>0$} 
     \State $e \leftarrow [ e , r ]$
     \State Update maximum of each row: $p_i(1)\leftarrow p_i(1)-r, \forall i$
     \State Sort each row of $M$ in decreasing order.
     \State $r\leftarrow \min_i(p_i(1))$
    \EndWhile
    \State \Return $e$.
\end{algorithmic}
\end{small}
\end{algorithm}

One can easily construct the joint distribution using a variant: Instead of sorting, at each step, find $r = \min_i\{\max_j\{p_i(j)\}\}$ and assign $r$ to the element with coordinates $(a_i)$, where $a_i=\argmax_j{p_i(j)}$.
\begin{lemma}
\label{lem:greedyUpperBound}
Greedy entropy minimization outputs a point with entropy at most $\log{m}+\log{n}$.
\end{lemma}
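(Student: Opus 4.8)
The plan is to bound the number of iterations of the while loop and control the size of each extracted mass $r$. At each iteration we append a value $r = \min_i\{\max_j p_i(j)\}$ to the list $e$, and we subtract $r$ from the current maximum of every row. The key structural observation is this: after updating and re-sorting, in at least one row the entry from which we subtracted $r$ has now dropped to a value strictly below some other entry in that row — in fact the row achieving the minimum in $\min_i\{\max_j p_i(j)\}$ has its top entry reduced to zero (if it was equal to $r$) or at least to a value no larger than the second entry. More carefully, the row $i^\ast$ attaining the minimum had maximum exactly $r$, so after subtraction its former maximum becomes $0$. So each iteration ``zeroes out'' at least one nonzero entry among the $mn$ entries of $\mat M$. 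Since $\mat M$ starts with at most $mn$ nonzero entries, the loop runs at most $mn$ times; hence $e$ has at most $mn$ entries, i.e., the exogenous variable $E$ whose distribution is $e$ has support size at most $mn$.

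Given support size at most $mn$, the crude bound $H(E) = H_1(e) \le \log(\text{support size}) \le \log(mn) = \log m + \log n$ follows immediately from the fact that Shannon entropy of a distribution on a set of size $k$ is at most $\log k$. This gives exactly the claimed bound. Concretely: let $e = (r_1, \dots, r_K)$ with $K \le mn$; then $H(E) = -\sum_{t=1}^K r_t \log r_t \le \log K \le \log m + \log n$.

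The one point that needs care — and which I expect to be the only real obstacle — is verifying that $e$ is indeed a valid probability vector produced by a terminating algorithm, and that each step genuinely eliminates a nonzero entry. For validity: each row of $\mat M$ sums to $1$, and since $r \le p_i(1) = \max_j p_i(j)$ for the row achieving the min, after subtraction every row still has nonnegative entries; the total mass removed across all rows must be tracked to see that $\sum_t r_t = 1$ at termination. This holds because the loop stops precisely when $\min_i\{\max_j p_i(j)\} = 0$, which happens iff some row has been driven entirely to zero; but every row has the same total mass $1$ and loses exactly $r_t$ at step $t$, so when one row is exhausted, all rows are exhausted simultaneously, giving $\sum_t r_t = 1$. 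For the termination/support-size count: one must check the re-sorting does not resurrect a zeroed entry — it does not, since sorting only permutes within a row, and a row whose maximum was just set to $0$ had all entries below the old maximum, so after re-sorting its new maximum is whatever the old second-largest entry was, and the zeroed slot persists as a zero. Thus the number of strictly positive entries in $\mat M$ is non-increasing and strictly decreases at each iteration, bounding $K \le mn$ and completing the argument.
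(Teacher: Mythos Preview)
Your proposal is correct and follows essentially the same route as the paper: bound the support size of the output vector $e$ and then invoke $H(E)\le \log(\text{support size})$. The paper states the slightly sharper support bound $m(n-1)+1$ (obtained by noting, as you yourself observe in your termination discussion, that the final iteration simultaneously zeroes one entry in each of the $m$ rows), whereas you use the coarser bound $mn$; both give $\log m + \log n$, so the difference is immaterial for the lemma as stated.
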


Lemma \ref{lem:greedyUpperBound} follows from the fact that the algorithm returns a support of size at most $m(n-1)+1$.

We also prove that, when there are two variables with $n$ dimensions, the algorithm returns a point that satisfies the KKT conditions of the optimization problem, which implies that it is a local optimum (see Proposition \ref{prop:greedyLocalOptima} in the Appendix).

\section{Experiments}
\label{sec:simulations}
In this section, we test the performance of our algorithms on real and artificial data. First, we test the greedy entropy minimization algorithm and show that it performs close to the trivial lower bound. Then, we test our conjecture of identifiability using entropy. Lastly, we test our entropy-minimization based causal identification technique on real data. 

In order to test our algorithms, we sample points in proper dimensional simplices, which correspond to distributions for $X$ and $E$. Distribution of points are uniform for selecting the distribution of $X$. It is well-known that a vector $[x_i/Z]_i$ is uniformly randomly distributed over the simplex, if $x_i$ are i.i.d. exponential random variables with parameter 1, and $Z=\sum_ix_i$ \cite{Onn2011}. To sample low-entropy distributions for $E$, instead of exponential, we use a heavy tailed distribution for sampling each coordinate. Specifically, we use $[e_i/Z]_i$, where $e_i$ are i.i.d. log-normal random variables with parameter $\sigma$. We observe that this allows us to sample a variety of distributions with small entropy.

\textbf{Performance of Greedy Entropy Minimization}:
We sample distributions for $n$ random variables $\{X_i\},i\in [n]$ each with $n$ states and apply Algorithm \ref{alg:heuristic} to minimize their joint entropy. We compare our greedy joint entropy minimization algorithm with the simple lower bound of $\max_i{H(X_i)}$. Figure \ref{fig:greedyEntropyPerformance} shows average, maximum and minimum excess bits relative to this lower bound. Contrary to the pessimistic bound of $\log{n}$ bits, joint entropy is at most 1 bit away from $\max_i H(X_i)$ for the given range of $n$.

\textbf{Verifying Entropy-Based Identifiability Conjecture}:
In this section, we empirically verify Conjecture \ref{conj:entropy}. The distributions for $X$ are uniformly randomly sampled from the simplex in $n$ dimensions. We also select $f$ randomly (see implementation details). For the log-normal parameter $\sigma$ used for sampling the distribution of $E$ from the $n(n-1)$ dimensional simplex, we sweep the integer values from 2 to 8. This allows us to get distribution samples from different regimes. We only consider the samples which satisfy $H(E)\leq \log{n}$.

After sampling $E,X,f$, we identify the corresponding $\mat{Y|X}$ and $\mat{X|Y}$ for $Y=f(X,E)$. We apply greedy entropy minimization on the columns of the induced distributions $\mat{Y|X},\mat{X|Y}$ to get the estimates $E,\tilde{E}$ for both causal models $Y=f(X,E)$ and $X=g(Y,\tilde{E})$, respectively. Figure \ref{fig:empiricalidentifiability} shows the variation of success probability, i.e., the fraction of samples which satisfy $H(X)+H(E)<H(Y)+H(\tilde{E})$. As observed, as $n$ is increased, probability of success converges to 1, when $H(E)\leq \log{n}$, which supports the conjecture.

\textbf{Experiments on Real Cause Effect Pairs}:
We test our entropy-based causal inference algorithm on the CauseEffectPairs repository \cite{CauseEffectRepo}. ANM have been reported to achieve an accuracy of $63\%$ with a confidence interval of $\pm 10\%$ \cite{Mooij2016}. We also use the binomial confidence intervals as in \cite{Clopper1934}. 

The cause effect pairs show very different characteristics. From the scatter plots, one can observe that they can be a mix of continuous and discrete variables. The challenge in applying our framework on this dataset is choosing the correct quantization. Small number of quantization levels may result in loss of information regarding the joint distribution, and a very large number of states might be computationally hard to work with. We pick the same number of states for both $X$ and $Y$, and use a uniform quantization that assures each state of the variables has $\geq 10$ samples on average. From the samples, we estimate the conditonal transition matrices $\mat{Y|X}$ and $\mat{X|Y}$ and feed the columns to the greedy entropy minimization algorithm (Algorithm \ref{alg:heuristic}), which outputs an approximate of the smallest entropy exogenous variable. Later we compare $H(X,E)$ and $H(Y,\tilde{E})$ and declare the model with smallest input entropy to be the true model, based on Conjecture \ref{conj:entropy}.

For a causal pair, we invoke the algorithm if $|H(X,E)-H(Y,\tilde{E})| \geq t\log(n)$ for threshold parameter $t$, which determines the decision rate. Accuracy becomes unstable for very small decision rates, since the number of evaluated pairs becomes too small. At $100\%$ decision rate, algorithm achieves $64.21\%$ which is slightly better than the $63\%$ performance of ANM as reported in \cite{Mooij2016}. In addition, our algorithm only uses probability values, and is applicable to categorical as well as ordinal variables.

\section*{Acknowledgements}
This research has been supported by NSF Grants CCF 1344179, 1344364, 1407278, 1422549 and ARO YIP W911NF-14-1-0258.

{\small
\bibliographystyle{plain}
\bibliography{causalinferenceSimple.bib}
}
\newpage
\section{Appendix}
\subsection{Proof of Lemma \ref{lem:characterization}}
($\Rightarrow$)Assume there exists a causal model $\mathcal{M} = (\{X,Y\},E, f, X\rightarrow Y, p_{X,E})$. Without loss of generality, assume $\mathcal{E}=[m]$ and $p_E = [e_1, e_2, \hdots, e_m ]$. We have
\begin{equation}
\label{eq:decomposition}
p(y|x) = \sum_{e\in\mathcal{E}}p(y|x,e)p(e|x) = \sum_{e\in\mathcal{E}}p(y|x,e)p(e)
\end{equation}
since $E\ci X$. Define the matrix $\mat{Y|X_k}(i,j) \coloneqq \pr{Y=i|X=j,E=k}$. Then, from (\ref{eq:decomposition}), we can decompose the conditional probability distribution matrix $\mat{Y|X}$ as follows:
\begin{equation}
\mat{Y|X} = \sum_{k\in\mathcal{E}}e_k\mat{Y|X_k}.
\end{equation}

Since $f$ is deterministic, each value of $X$ is mapped to exactly one value of $Y$, when $E$ is conditioned on. Thus each column of $\mat{Y|X_k}$ has exactly a single 1 with remaining entries being zeros. Thus, each entry of $\mat{Y|X}$ is a subset sum of $p_E$. Let $S_{i,j}$ represent this subset, i.e., $p(y=i|x=j) = \sum_{k\in S_{i,j}}e_k$. Notice that the $x$th column of $\mat{Y|X}$ is the conditional distribution $\pr{Y|X=x} = \pr{f(x,E)|X=x}=\pr{f(x,E)} = \pr{f_x(E)}$, where $ f_x(E) \coloneqq f(x,E)$. Since $f_x$ is a deterministic function, each value in its domain maps to exactly one value in its range. This implies that $S_{y,x} = f_x^{-1}(y)$ are disjoint for fixed $x$, i.e., $S_{i,j}\neq S_{l,j}\forall l\neq i$. Also, since each value of $E$ must be mapped to a value of $Y$ by $f_x$, the union of $S_{i,j}$ over $i$ must be the whole support $[m]$.

Define $\mat{m}_{i,j}$ to be the length $m$ vector which is 1 in the columns indexed by $S_{i,j}$. Construct $\mat{M}$ from the rows $\mat{m}_{i,j}$ such that $m_{i,j}$ is the $i+(j-1)n$th row of $\mat{M}$. By construction, $\mat{M}$ is a block partition matrix. Pick $\mat{e} = [e_1,e_2,\hdots, e_m]$. Then we have $vec(\mat{Y|X}) = \mat{Me}$.

($\Leftarrow$) For reverse direction, assume there exists matrices $\mat{M,e}$ with block partition $\mat{M}$ and $\sum_i e_i = 1$, such that $vec(\mat{Y|X}) = \mat{Me}$. Define $E$ to be the random variable independent from $X$, with probability distribution $p_E\coloneqq \mat{e}$ and support $\mathcal{E} = [m]$. 

Let $\mat{w}_i$ be the $i$th column of $\mat{M}$. Then we have $vec(\mat{Y|X}) = \sum_i e_i \mat{w}_i$. Now de-vectorize $\mat{Y|X}$ and $\mat{w}_i$ to have
\begin{equation}
\mat{Y|X} = \sum_{i=1}^m e_i \mat{U}_i,
\end{equation}
where $\mat{w}_i = vec(\mat{U}_i)$. Each column of $\mat{U}_i$, comes from distinct size-$n$ blocks of $\mat{M}$ and since $\mat{M}$ is block partition, each column of $\mat{U}_i$ contains a single 1 $\forall i$. Thus each $\mat{U}_i$ represent a valid map $f_i:\mathcal{X}\rightarrow \mathcal{Y}$, where $f_i(x)$ is given by the nonzero row of $x$th column of $\mat{U}_i$.

A function $f$ with two input variables $X,E$, where $E\in\mathcal{E} = [m]$ is completely determined by the set of functions $\{f(X,1),f(X,2),...,f(X,m)\}$. Let $f(X,e)$ be the function described by the matrix $\mat{U}_e$ and $f$ be the function determined by $\{f(X,e),e\in\mathcal{E}\}$. Apply the constructed $f$ on $X,E$ to get $Z = f(X,E)$. Then we have $\pr{Y = y|X=x} = \pr{Z = y|X=x}\forall x$, since both $Z$ and $Y$ induce the same conditional distribution $\mat{Y|X}$. 

For any set of realizations of $(X,Y) = \{x_i,y_i\}$, we can construct a set of realizations of $E, \{e_i\}$ based on the conditional distribution $\pr{E|Z=y_i,X=x_i}$. Then we have $y_i = f(x_i,e_i)$. Also, since $\pr{Z=y,X=x} = \pr{Y=y,X=x}$ this process induces the same joint distributions between variables $X,Y,E$ and $X,Z,E$, i.e.,  $\pr{X=x,E=e,Z=y} = \pr{X=x,E=e,Y=y}$, and conditional independence statement implied by one holds for the other. Thus $X\ci E$.

\subsection{Unidentifiability without assumptions}
Here we prove that we can fit causal models in both directions $X\rightarrow Y,Y\rightarrow X$ given any joint distribution.
\begin{lemma}
\label{lem:unidentifiability}
Let $X\in \mathcal{X},Y\in\mathcal{Y}$ be discrete random variables with an arbitrary joint distribution $p(x,y)$, where $\lvert\mathcal{X}\rvert = m,\lvert\mathcal{Y}\rvert = n$. Then there exists two causal models $\mathcal{M}_1=(\{X,Y\},E,f,X\rightarrow Y,p_{X,E})$ and  $\mathcal{M}_2=(\{X,Y\},\tilde{E},g,X\leftarrow Y,p_{Y,\tilde{E}})$  with $E\ci X$ and $\tilde{E}\ci Y$ that induce the same joint distribution $p(x,y)$.
\end{lemma}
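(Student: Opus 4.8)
The plan is to exhibit explicit causal models in each direction; by the symmetry $X \leftrightarrow Y$ it suffices to construct $\mathcal{M}_1$ for $X \rightarrow Y$ and then apply the identical argument with the roles of $X$ and $Y$ interchanged to obtain $\mathcal{M}_2$. So the real work is a single lifting: turn the factorization $p(x,y) = p(x)p(y\mid x)$ into a deterministic-function-plus-independent-noise representation.

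For the forward model I would take the exogenous variable $E$ to range over the set $\mathcal{E} = \mathcal{Y}^{\mathcal{X}}$ of \emph{all} functions from $\mathcal{X}$ to $\mathcal{Y}$, and let $f(x,e) = e(x)$ be function evaluation. Define the distribution $p_E$ to be the product $\prod_{x\in\mathcal{X}} p(\cdot\mid x)$, so that the coordinate $e(x)$ is distributed as $p(\cdot\mid x)$; crucially $E$ is drawn independently of $X$, so $E\ci X$. Then $\pr{Y=y\mid X=x} = \pr{f(x,E)=y} = \pr{E(x)=y} = p(y\mid x)$, hence $\pr{X=x,Y=y} = p(x)p(y\mid x) = p(x,y)$, as required. (One could equally invoke Lemma~\ref{lem:characterization}: the conditional matrix $\mat{Y|X}$ is a convex combination of the deterministic maps $\mathcal{X}\rightarrow\mathcal{Y}$, which is exactly a block partition decomposition $vec(\mat{Y|X}) = \mat{Me}$; or, after padding $\mathcal{X},\mathcal{Y}$ to a common size, route through Lemma~\ref{lem:decompUpper} to keep the exogenous alphabet small.)

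Running the same construction on $\mat{X|Y}$ — that is, $\tilde{E}\in\mathcal{X}^{\mathcal{Y}}$, $g(y,\tilde e) = \tilde e(y)$, $p_{\tilde E} = \prod_{y} p(\cdot\mid y)$, drawn independently of $Y$ — yields $\mathcal{M}_2$ with $X\leftarrow Y$, $\tilde E\ci Y$, and induced joint $p(y)p(x\mid y) = p(x,y)$. Both models induce $p(x,y)$, so the claim follows.

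There is essentially no obstacle here: the content of the lemma is just that each of the two factorizations of $p(x,y)$ lifts to a structural equation with independent noise, and the function-valued noise variable does this in one line. The only points requiring care are bookkeeping — checking that the tuple $(\{X,Y\},E,f,X\rightarrow Y,p_{X,E})$ satisfies all clauses of the definition of a causal model (mutual independence of exogenous variables is vacuous with a single exogenous variable, and causal sufficiency holds since $E$ parents only $Y$), and confirming the induced joint matches — both immediate from the computation above.
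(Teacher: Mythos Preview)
Your proof is correct and takes a cleaner route than the paper's. The paper proceeds constructively via the block partition machinery: assuming rational entries in $\mat{Y|X}$, it writes each conditional probability as an integer multiple of $\epsilon = 1/\lambda$ (where $\lambda$ is the least common multiple of all denominators), then peels off $\lambda$ deterministic column-stochastic $\{0,1\}$ matrices $F_k$ each with weight $\epsilon$, and invokes Lemma~\ref{lem:characterization} to convert this into a causal model with $|\mathcal{E}| = \lambda$; the irrational case is dismissed in one line (``we can still find small enough $\epsilon$''), which is a bit informal. Your function-valued noise construction --- $E \in \mathcal{Y}^{\mathcal{X}}$ with the product law $\prod_{x} p(\cdot\mid x)$ and $f(x,e) = e(x)$ --- is the standard noise-outsourcing argument: it works uniformly for arbitrary real conditional probabilities, makes $E\ci X$ hold by fiat, and the verification $\pr{f(x,E)=y} = \pr{E(x)=y} = p(y\mid x)$ is a one-liner. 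It yields $|\mathcal{E}| = n^m$ rather than the paper's $\lambda$, but neither cardinality matters for the lemma as stated. The paper's approach has the side benefit of previewing the $\mat{Me}$ decomposition used downstream; yours is more self-contained and arguably the textbook proof. Your alternative suggestion of routing through Lemma~\ref{lem:decompUpper} (after padding to a common alphabet size) would also work and is in fact closer in spirit to what the paper does, with the advantage of a clean bound on $|\mathcal{E}|$ and no rational/irrational case split.
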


We will prove by construction. Consider the conditional probability transition matrix $\mat{Y|X}$. Without loss of generality, assume $\mathcal{X} = [m],\mathcal{Y}=[n]$. From Lemma \ref{lem:characterization}, it is sufficient to show that there exists $\mat{M}\in\mathcal{C}$ and $\mat{e}$ such that $vec(\mat{Y|X}) = \mat{Me}$. 

For now, assume that each entry of $\mat{Y|X}$ is a rational number. Scale the fractional form of each term in $\mat{Y|X}$ so that each denominator becomes the same as the least common multiple of denominators. Denote this least common multiple by $\lambda$. Let $\epsilon$ be $1/\lambda$.

Let $ \mat{G}_0 = \mat{Y|X}$ and apply the following procedure for $i$ from $1$ to $\lambda$ in order to construct $\{F_i,i\in[\lambda]\}$: Set $\mat{G}_{i+1}\text{ to }  \mat{G}_i - \epsilon F_i$, where $F_i$ is the $n\times m, \{0,1\}$ matrix containing only a single 1 in the largest entry of every column of current $\mat{G}_i$. This procedure is called to be successful if $\mat{G}_i$ is set to all zero matrix for $i=\lambda$.

The above procedure iteratively removes 1 from the numerator of the fractional form of one probability value per column (of $\mat{Y|X}$). Since each column sums to 1, numerators of each column sum to $\lambda$. Thus the procedure is successful. Thus we have,
\begin{equation}
\mat{Y|X} = \sum_{k=1}^\lambda \epsilon F_k.
\end{equation}

Let $\mat{e} = [\epsilon, \epsilon, ..., \epsilon]$ be a length-$\lambda$ vector. Each entry of $\mat{Y|X}$ is a subset sum of $\mat{e}$. Also, since $F_i$ contains a single 1 per column by construction, every subset of $\mat{e}$ is disjoint within a column. Thus, we can construct $\mat{M}\in\mathcal{C}$ such that $vec(\mat{Y|X}) = \mat{Me}$.

If the entries are not fractional, we can still find small enough $\epsilon$ to complete the above procedure.

The same process can be implemented with $\mat{X|Y}$ to obtain $\tilde{E},g$ such that $X = Y(g,\tilde{E})$.

\subsection{Proof of Lemma \ref{lem:decompUpper}}
We prove by construction for any given joint probability distribution. Consider the decomposition described in the proof of Lemma \ref{lem:unidentifiability}. Assume without loss of generality that $\mathcal{X} = \mathcal{Y} = [n]$. Now, instead of taking out $\lambda=1/\epsilon$, at step $i$, remove minimum of the maximum probability values at each column of the current $\mat{G}_i$ matrix from the maximum probability locations. Thus, at each iteration $i$, at least one entry of the matrix $\mat{G}_i$ is zeroed out. Since sum of the values in each column remains the same after each iteration, after at most $n(n-1)$ steps, each column must have the same single nonzero value.  Thus the algorithm finalizes in $n(n-1)+1$ steps.

Notice that this algorithm is the same as the entropy minimization algorithm we propose in Algorithm \ref{alg:heuristic}. For a more detailed explanation of the algorithm steps, see Algorithm \ref{alg:heuristic}.

In the case when the matrix has zero entries, it can be shown that one can always find a decomposition with $nnz-1$ terms, where $nnz$ is the number of non-zero elements in the matrix.

\subsection{Proposition \ref{prop:helper}}
\begin{proposition}
\label{prop:helper}
Let the columns of $\mat{Y|X}$ be $n$ points independently sampled from the uniform distribution over the $n-1$ simplex. Then, with probability 1, $\nexists (\mat{M},\mat{e})$ with  $vec(\mat{Y|X}) = \mat{M}\mat{e}$ for $\mat{M}\in\{0,1\}^{n^2 \times m}$, when  $m< n(n-1)$.
\end{proposition}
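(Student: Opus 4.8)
The plan is to argue by a dimension/genericity count over the finitely many candidate block-partition matrices. Fix $m < n(n-1)$. By Lemma \ref{lem:characterization}, a decomposition $vec(\mat{Y|X}) = \mat{M}\mat{e}$ with $\mat{M}\in\mathcal{C}$ and $\mat{e}$ a probability vector of length $m$ exists if and only if there is a valid causal model with an exogenous variable on $m$ states. So it suffices to show: for each \emph{fixed} block partition matrix $\mat{M}\in\{0,1\}^{n^2\times m}$, the set of matrices $\mat{Y|X}$ (equivalently, the set of $n$-tuples of simplex points forming its columns) that lie in the image $\{\mat{M}\mat{e} : \mat{e}\in\Delta_{m-1}\}$ has Lebesgue measure zero in the natural parameter space; then take a union over the finitely many choices of $\mat{M}$.

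The key observation is a dimension count. The ambient space of $\mat{Y|X}$ is $n$ independent points in the $(n-1)$-simplex, hence a manifold of dimension $n(n-1)$. On the other hand, the image of the affine-linear map $\mat{e}\mapsto \mat{M}\mat{e}$ (restricted to the simplex $\sum_i e(i)=1$) has dimension at most $m - 1 < n(n-1) - 1 < n(n-1)$. A proper affine subspace intersected with our parameter manifold — more precisely, the preimage under the (smooth, full-rank) coordinate chart that sends $n$-tuples of simplex points to $vec(\mat{Y|X})$ — is a measure-zero subset. Since $\mathcal{C}$ is finite (it is a subset of $\{0,1\}^{n^2\times m}$), the union over all $\mat{M}\in\mathcal{C}$ of these bad sets is still measure zero, so with probability $1$ no such decomposition exists.

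First I would state the correspondence via Lemma \ref{lem:characterization} to reduce from ``no causal model on $m$ states'' to ``$vec(\mat{Y|X})$ is not in $\bigcup_{\mat{M}\in\mathcal{C}}\mat{M}\cdot\Delta_{m-1}$''. Second, I would set up the parametrization: the uniform measure on $n$ independent points of $\Delta_{n-1}$ is absolutely continuous with respect to the $n(n-1)$-dimensional Lebesgue measure in local coordinates. Third, for each fixed $\mat{M}$, bound $\dim(\mathrm{span}\,\mat{M}) \le m$ and note $\mat{M}\cdot\Delta_{m-1}$ lies in an affine subspace of dimension $\le m-1$; since $m-1 < n(n-1)$, this subspace meets the parameter manifold in a null set. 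Fourth, finish by a finite union bound over $\mathcal{C}$.

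The main obstacle is the second step done carefully: one must check that pulling the affine subspace $\mathrm{range}(\mat{M})$ back through the chart really does give a Lebesgue-null set, i.e. that the column-stacking map from (tuples of simplex points) to $\mathbb{R}^{n^2}$ is a smooth embedding whose image is an $n(n-1)$-dimensional submanifold transverse-generically to the low-dimensional subspace — a proper algebraic subvariety of a manifold has measure zero, which handles it, but it is worth spelling out that the intersection is not accidentally all of the manifold (it is not, precisely because $m-1 < n(n-1)$). The rest is routine.
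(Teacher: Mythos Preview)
Your proposal is correct and follows essentially the same strategy as the paper: for each fixed $\mat{M}$ (finitely many), the image $\{\mat{M}\mat{e}:\mat{e}\in\Delta_{m-1}\}$ has affine dimension at most $m-1<n(n-1)$ and is therefore a null set in the $n(n-1)$-dimensional product of simplices, and one finishes by a finite union bound. The paper carries out the same dimension count more explicitly---it parametrizes each simplex column by $n-1$ i.i.d.\ continuous random variables, drops the $n$ redundant coordinates of $vec(\mat{Y|X})$ to land in $\mathbb{R}^{n(n-1)}$, and then argues that rank-deficiency of $\mat{M}$ forces a fixed nontrivial linear relation among those $n(n-1)$ independent continuous variables, a probability-zero event---but the underlying idea is identical. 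One minor remark: the proposition as stated ranges over all $\mat{M}\in\{0,1\}^{n^2\times m}$, not only over $\mathcal{C}$, so the detour through Lemma~\ref{lem:characterization} is unnecessary (though harmless); your dimension argument applies verbatim to the larger, still finite, set.
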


\begin{proof}
We use the following technique to generate uniformly randomly sampled points on the simplex in $n$ dimensions \cite{Onn2011}:
\begin{lemma}
\label{lem:uniformSimplex}
Let $x_i\sim \mathcal{U}[0,1]$ for $i\in[n-1]$ be i.i.d random variables, ordered such that $x_i\geq x_j$ for $i>j$. Let $x_0 = 0$ and $x_n = 1$. Then $\mat{u} = [u_i]_{i\in[n]}$ where $u_i = x_i-x_{i-1}, \forall i\in[n]$, is a random vector uniformly distributed over $n-1$ simplex.
\end{lemma}

First, we construct $vec(\mat{Y|X})$ directly using $n(n-1)$ uniform i.i.d. random variables: Consider $\tilde{x}_i$ for $i\in[n^2]$ where each consecutive block $\{\tilde{x}_{jn+1},\tilde{x}_{jn+2},\hdots,\tilde{x}_{jn+n}\},j\in \{0,1,\hdots,n-1\}$ of size $n$ is sampled from the generative model in Lemma \ref{lem:uniformSimplex} before reordering. Thus $\tilde{x}_i$ are i.i.d. with $\tilde{x}_i\sim \mathcal{U}[0,1]$ for $n(n-1)$ indexes by construction. Order $\tilde{x}_i$ within each block in ascending order to get $x_i$ in accordance with Lemma \ref{lem:uniformSimplex}. Defining the vectors $\mat{x} = [x_i]$ and $\mat{\tilde{x}} = [\tilde{x}_i]$, we have $\mat{x} = \mat{P\tilde{x}}$ for some permutation matrix $\mat{P}$. From $\mat{x}$, we can construct $\mat{\tilde{z}} = vec(\mat{Y|X})$ using the same map used in Lemma \ref{lem:uniformSimplex} for each block (the map from $u_i$ from $x_i$ in Lemma \ref{lem:uniformSimplex}). This construction is a linear map $\mat{H}$ where each submatrix of $\mat{H}$ is full rank. 

For the sake of contradiction, let $\mat{\tilde{z}} = \mat{\tilde{M}e}$ be a decomposition where $\mat{\tilde{M}}\in\{0,1\}^{n(n-1)\times m}$ where $m<n(n-1)$. Ignoring the entries where $\tilde{z_i} = 1-x_{i-1}$, we can relabel $\mat{\tilde{z}}$ to get $\mat{z}$ with $\mat{z}\in[0,1]^{n(n-1)}$. Correspondingly, we can write $\mat{z} = \mat{Me}$, where $\mat{M}$ is the submatrix of $\mat{\tilde{M}}$ obtained by ignoring the corresponding rows.

The construction of $\mat{z}$ from $\mat{x}$ based on the above construction yields $\mat{z} = \mat{Wx}$ for a full rank matrix $\mat{W}$. Notice that any subset of rows are linearly independent due to specific structure of $\mat{H}$.

Let $r$ be the rank of $\mat{M}$. Clearly $r<n(n-1)$. Then, some of the (at least $n(n-1)-r$) rows of $\mat{M}$ can be written as a unique linear combination of $r$ linearly independent rows. 

Consider one such row $m_0$, where $m_0 = \sum_{i=1}^r\alpha_im_i$ and $m_i$ are a set of linearly independent rows of $\mat{M}$. Define $\mat{a} = [-1, \alpha_1, \alpha_2, ... \alpha_r]^T$. Take $r+1$ rows of $\mat{W}$ corresponding to the selected $z_i$'s to form $\mat{W}_r$. Then we have, $\mat{a}^T\mat{W}_r\mat{x}=\mat{a}^T\mat{W}_r\mat{P}\mat{\tilde{x}} = 0$. Recall that any subset of rows of $\mat{W}$ is full rank, and $\mat{P}$ is a permutation matrix. Hence, left nullspace of $\mat{W}_r\mat{P}$ is empty, implying that $\mat{\tilde{x}}$ has to be orthogonal to the vector $\mat{a}^T\mat{W_r}\mat{P}\neq 0$. This is a probability 0 event for any nonzero $\mat{a}$, since each $\tilde{x_i}$ is independently sampled from a continuous distribution. Probability that all such constraints are satisfied is zero since it is less than the probability that one particular constraints is satisfied. This argument holds for any fixed $\mat{M}$. Since there are finitely many such $\{0,1\}$ matrices, probability that there exists such an $\mat{M}$ is 0.

Thus, unless $\mat{M}$ has rank at least $n(n-1)$, there does not exist a decomposition $\mat{M}\mat{e} = \mat{z}$ with probability 1.
\end{proof}

\subsection{Proof of Theorem \ref{thm:main}}
For sampling from the simplex, we use the following model:

\begin{lemma}[\cite{Onn2011}]
Let $x_i$ for $i\in [n]$ be independent, exponential random variables with mean 1. Then the random vector $\left[\frac{x_1}{\sum_i x_i}, \frac{x_2}{\sum_i x_i}, ..., \frac{x_n}{\sum_i x_i}\right]$ is uniformly distributed over the $n-1$  simplex.
\end{lemma}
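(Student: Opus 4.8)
The plan is to prove this by an explicit change of variables. Since the $x_i$ are i.i.d.\ exponential with mean $1$, their joint density on $\mathbb{R}_{>0}^n$ is $e^{-(x_1+\cdots+x_n)}=e^{-s}$, where $s=\sum_i x_i$; the key feature is that this density depends on the $x_i$ only through the sum $s$. I would pass to the new coordinates $(u_1,\dots,u_{n-1},s)$ given by $s=\sum_i x_i$ and $u_i=x_i/s$, with the remaining barycentric coordinate $u_n=1-u_1-\cdots-u_{n-1}$ eliminated; the inverse map is $x_i=su_i$ for $i\le n-1$ and $x_n=s(1-u_1-\cdots-u_{n-1})$.

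The first computation is the Jacobian of this inverse map. Adding rows $1,\dots,n-1$ of the Jacobian matrix to row $n$ collapses that row to $(0,\dots,0,1)$, after which a cofactor expansion leaves the determinant of $\mathrm{diag}(s,\dots,s)$, so $|\det J|=s^{\,n-1}$. Hence the joint density of $(u_1,\dots,u_{n-1},s)$ on the region $\{s>0,\ u_i>0,\ \sum_{i<n}u_i<1\}$ is $s^{\,n-1}e^{-s}$, which factorizes as $\bigl[\tfrac{1}{\Gamma(n)}s^{\,n-1}e^{-s}\bigr]\cdot\bigl[\Gamma(n)\bigr]$, i.e.\ a $\mathrm{Gamma}(n,1)$ density in $s$ times the constant $(n-1)!$ in $u$. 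Since the $u$-factor is constant on the open simplex, whose $(n-1)$-dimensional Lebesgue volume is exactly $1/(n-1)!$, it is the uniform density there; so $(u_1,\dots,u_{n-1})$ is uniform on the simplex (and, incidentally, independent of the total $s$). Translating back through the linear parametrization $u\mapsto(u_1,\dots,u_{n-1},1-\sum u_i)$, which has constant Jacobian, gives that $[x_1/s,\dots,x_n/s]$ is uniform on the $(n-1)$ simplex, as claimed.

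I would also record the quicker conceptual version: because $e^{-s}$ is constant on each slice $\{x:\sum_i x_i=s\}$, the conditional law of $(x_1,\dots,x_n)$ given $\sum_i x_i=s$ is uniform on that scaled simplex; rescaling by $1/s$ sends it to the uniform law on the standard simplex, and since this conditional law does not depend on $s$, the normalized vector is independent of the sum and marginally uniform. This is essentially the same argument with the Jacobian bookkeeping hidden.

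The only real obstacle here is routine care, not depth: setting up the $n\to n$ change of variables correctly when one barycentric coordinate is eliminated, verifying $|\det J|=s^{\,n-1}$, and — if one wants the conditioning-on-a-null-set phrasing to be rigorous — appealing to the coarea formula or a disintegration rather than hand-waving. For the write-up I would present the explicit Jacobian computation as the primary proof and mention the slicing heuristic as motivation.
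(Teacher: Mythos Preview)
Your proof is correct and is the standard change-of-variables argument for this fact. Note, however, that the paper does not actually prove this lemma: it is stated as a cited result from \cite{Onn2011} and used as a black box in the proof of Theorem~\ref{thm:main}, so there is no proof in the paper to compare your argument against.
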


Assume above generative model for sampling the distributions of $X$ and $E$: Let $\{x_i,i\in [n]\}$ and $\{e_i,i\in[\theta]\}$ be sets of independent identically distributed exponential random variables with mean 1. Assign $\pr{X=i} = \frac{x_i}{\sum_j x_j}$, and $\pr{E=k} = \frac{e_k}{\sum_j e_j}$.

Let $\mat{p} = \mat{vec(Y|X)}$. Then, as shown in Section \ref{subsec:identifiability}, we can write
\begin{equation}
\mat{p} = \mat{M}\mat{e}
\end{equation}

Notice that $j$th block of $n$ rows of $\mat{p}$ give the conditional probability distribution of $Y$ given $X=j$. Let $S_{i,j}$ represent the set of indices of $\mat{e}$ that contribute to the probability of observing $Y=i$ given $X=j$, i.e., $p_{i,j} = \frac{1}{\sum_je_j}\sum\limits_{k\in S_{i,j}}e_k$. Thus $i$th row in $j$th block, or equivalently $i + (j-1)n$th row of $\mat{M}$ is 1 in the indices $S_{i,j}$.

The fact that $f$ is generic implies each row of $\mat{M}$ is distinct and non-empty.

For the sake of contradiction, assume that there exists $\tilde{E}\ci Y$ with cardinality $m<n(n-1)$, with some deterministic function $g$ such that $X=g(Y,\tilde{E})$ induces the same joint distribution $p(x,y)$. By Lemma \ref{lem:characterization}, this implies that there exists $\mat{\tilde{M}}\in \{0,1\}^{n^2\times m}$ end $\mat{\tilde{e}}$ such that 
\begin{equation}
\label{eq:reverseAppendix}
\mat{q} = \mat{\tilde{M}}\mat{\tilde{e}},
\end{equation}
where $\mat{q} = \mat{vec(X|Y)}$ and $\mat{\tilde{e}}$ is the probability distribution vector of $\tilde{E}$.

Let $q_{i,j} = \pr{X=i|Y=j}$. Then from Bayes' rule, we have 
\begin{equation}
q_{i,j} = \frac{p_{j,i}x_i}{\sum_i p_{j,i}x_i} = \frac{x_i\sum_{k\in S_{j,i}}e_k}{\sum_i x_i\sum_{k\in S_{j,i}}e_k}.
\end{equation}
Notice that the denominators $\sum_j x_j$ and $\sum_j e_j$ in each term disappear due to cancellation of numerator and denominator. 

Thus we have 
\begin{align}
\mat{q} = 
&\left[ \frac{p_{11}x_1}{\sum_k p_{1k}x_k} ,  \frac{p_{12}x_2}{\sum_kp_{1k}x_k} , \cdots ,\right.\\
&\left.  \frac{p_{1n}x_n}{\sum_kp_{1k}x_k} , \frac{p_{21}x_1}{\sum_kp_{2k}x_k} , \cdots  \frac{p_{nn}x_n}{\sum_kp_{nk}x_k}
\right]^T
\end{align}

From (\ref{eq:reverseAppendix}), drop the rows of $\mat{q}$ that contain $x_n$ in the numerator as well as the corresponding rows of $\mat{\tilde{M}}$. The new linear system becomes:
\begin{equation}
\mat{\bar{q}} = \mat{\bar{M}}\mat{\tilde{e}},
\end{equation}
where $\mat{\bar{q}}$ and $\mat{\bar{M}}$ are the desribed submatrices of $\mat{q}$ and $\mat{\tilde{M}}$ respectively.

$\mat{\bar{M}}$ has $n(n-1)$ rows and $m$ columns. We have 
\begin{equation}
\text{rank}(\mat{\bar{M}})\leq m<n(n-1).
\end{equation}
Since rank of $\mat{\bar{M}}$ is less than $n(n-1)$, the rows of $\mat{\bar{M}}$ are linearly dependent. This implies there is at least one set of coefficients $\{\alpha_i\}$ not identically zero that satisfies $\mat{\alpha\mat{\bar{M}}} = 0$, where 
\begin{equation}
\mat{\alpha} = \begin{bmatrix}
\alpha_{1,1}, & \alpha_{1,2}, & \cdots ,&\alpha_{1,n-1},& \alpha_{2,1},& \cdots , & \alpha_{n,n-1}
\end{bmatrix}.
\end{equation}

Then,
\begin{equation}
\mat{\alpha}\mat{\bar{q}} = \mat{\alpha \bar{M}\tilde{e}} =0 
\end{equation}

Hence, the elements of $\mat{\bar{q}}$ should satisfy the linear equation. Then this linear equation in terms of $q_{i,j}$ can be written as:

\begin{alignat}{2}
&\sum_{i=1}^n  \frac{\sum_{j=1}^{n-1}\alpha_{i,j}p_{i,j}x_j}{\sum_{j=1}^{n}p_{i,j}x_j} = \frac{\sum_{j=1}^{n-1}\alpha_{1,j}p_{1,j}x_j}{\sum_{j=1}^{n}p_{1,j}x_j} +\frac{\sum_{j=1}^{n-1}\alpha_{2,j}p_{2,j}x_j}{\sum_{j=1}^{n}p_{2,j}x_j} \nonumber\\
& +\frac{\sum_{j=1}^{n-1}\alpha_{3,j}p_{3,j}x_j}{\sum_{j=1}^{n}p_{3,j}x_j} + \hdots + \frac{\sum_{j=1}^{n-1}\alpha_{n,j}p_{n,j}x_j}{\sum_{j=1}^{n}p_{n,j}x_j}= 0  \label{eq:polynomial}
\end{alignat}

Slightly abusing the notation, relabel $p_{i,j}$ as $p_{i,j} = \sum_{k\in S_{i,j}}e_k$, since $\sum_k e_k$ terms cancel in the expression above. We know from Section \ref{subsec:identifiability} that $S_{i,j}\cap S_{k,j} = \emptyset$ for $k\neq i$. Additionally, due to the assumption that $f$ is generic, we have $S_{i,j}\neq S_{i,k}$.

To prove contradiction, in the following we show that for any given non-zero $\alpha$, this equation is non-zero with probability 1.

To show this, we show that after equating the denominators, each term brings a unique monomial. Hence, the result is a polynomial where each $\alpha_{i,j}$ is accompanied with at least one unique monomial. Thus, the polynomial cannot be identically zero, and the probability of choosing a root of this polynomial is zero.

For now, assume that the denominator is finite. Later, we will show denominator is almost surely finite to complete the argument. 

\begin{lemma}
\label{lem:poly}
If $x_i$ and $e_i$ are i.i.d. exponential random variables with mean 1, and the function $f$ is generic, (\ref{eq:polynomial}) holds with probability 0, i.e., $\pr{(\ref{eq:polynomial})\text{ holds }} = 0$.
\end{lemma}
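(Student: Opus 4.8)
The plan is to show that equation~(\ref{eq:polynomial}), after clearing denominators, becomes a polynomial identity in the variables $x_1,\dots,x_{n-1}$ (treating $x_n$ as an arbitrary nonzero constant, since we dropped those rows) and $e_1,\dots,e_\theta$, and that this polynomial is \emph{not identically zero} whenever $\alpha\neq 0$. Once that is established, the set of $(x,e)$ at which it vanishes is a proper algebraic subvariety of the positive orthant, hence has Lebesgue measure zero; since the $x_i,e_k$ are drawn from a distribution absolutely continuous with respect to Lebesgue measure, the event has probability $0$. Summing over the finitely many possible block-partition supports $\{S_{i,j}\}$ (equivalently finitely many generic $f$ combinatorial types) and finitely many candidate dependency patterns $\alpha$ keeps the union measure zero.

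The core combinatorial step is to exhibit, for each fixed index $i_0$ and each $j_0\le n-1$ with $\alpha_{i_0,j_0}\neq 0$, a monomial in the cleared-denominator polynomial that is contributed by the $(i_0,j_0)$ term alone and by no other term. After multiplying through by $\prod_{i=1}^n D_i$ where $D_i=\sum_{j=1}^n p_{i,j}x_j=\sum_{j=1}^n x_j\sum_{k\in S_{i,j}}e_k$, the $(i_0,j_0)$ summand becomes $\alpha_{i_0,j_0}\,p_{i_0,j_0}\,x_{j_0}\prod_{i\neq i_0}D_i$. I would select from each $D_i$ ($i\neq i_0$) the term $x_n\sum_{k\in S_{i,n}}e_k$, i.e.\ the "$x_n$-column" contribution, producing a candidate monomial of the form $x_{j_0}x_n^{\,n-1}\cdot(\text{product of one }e_k\text{ from each }S_{i,n},\ i\ne i_0)\cdot(\text{one }e_k\text{ from }S_{i_0,j_0})$. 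The claim is that for a \emph{generic} $f$ one can choose the representative $e$-indices so that this monomial cannot arise from any other term $(i_1,j_1)$: a term $(i_1,j_1)$ with $i_1\ne i_0$ contributes $\prod_{i\ne i_1}D_i$, which contains the factor $D_{i_0}$ but not $D_{i_1}$, so its $x$-degree pattern in the coordinate $x_{j_0}$ versus $x_n$ differs, or forces an $e$-index from $S_{i_1,j_1}$ that is disjoint from $S_{i_1,n}$; genericity ($S_{i,j}\ne S_{i,k}$, and $S_{i,j}\cap S_{l,j}=\emptyset$) is exactly what prevents the coincidence. A term $(i_0,j_1)$ with $j_1\ne j_0$ carries $x_{j_1}$ instead of $x_{j_0}$ in front, so it produces a different monomial in the $x$-variables. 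Hence each nonzero $\alpha_{i_0,j_0}$ survives with its own monomial and the polynomial is not identically zero.

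I expect the main obstacle to be making the "unique monomial" argument fully rigorous: one must track both the $x$-exponent vector and the multiset of $e$-indices simultaneously, and rule out cancellations \emph{within} a single term's expansion as well as across terms, which requires carefully choosing which summand of each $D_i$ to extract. The cleanest route is probably to pick the extremal monomial under a suitable monomial order — e.g.\ order $x$-variables so that $x_n$ is largest and within a fixed $x$-degree compare the lexicographically least $e$-index drawn from the relevant sets — and argue that the leading monomial contributed by the $(i_0,j_0)$ block is uniquely attained; genericity guarantees the needed distinctness of the sets $S_{i,j}$ so the leading terms of distinct blocks do not collide. The remaining loose end, that the denominators $D_i$ are almost surely finite and nonzero, is immediate since each $D_i$ is a finite positive sum of products of the (finite, positive) $x_j$ and $e_k$, so I would dispatch it in one line and then conclude by the measure-zero argument above.
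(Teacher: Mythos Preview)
Your proposal follows essentially the same route as the paper: clear denominators, isolate the coefficient of $x_n^{\,n-1}x_{j_0}$ in the numerator, and use genericity of $f$ to show the resulting polynomial in the $e_k$ is not identically zero, whence the zero set has Lebesgue measure zero. Two points of comparison are worth noting.

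First, your sketch for the case $i_1\neq i_0$ but $j_1=j_0$ is not quite right: the $x$-degree pattern does \emph{not} differ (both contribute to $x_{j_0}x_n^{\,n-1}$), and genericity gives $S_{i_1,j_0}\neq S_{i_1,n}$, not disjointness. This is exactly the crux you flagged, and the paper resolves it not by a monomial order but by an explicit dichotomy: for each $i$, either some $e_k\in S_{i,j_0}\setminus S_{i,n}$, in which case (using that the $S_{\cdot,n}$ partition $[\theta]$) $e_k^2$ appears in the $i$-th summand's $e$-polynomial and in no other; or some $e_k\in S_{i,n}\setminus S_{i,j_0}$, in which case $e_k$ is entirely absent from the $i$-th summand but divides $p_{i,n}$ and hence appears in every other summand. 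Either way the $i$-th summand is separated from the rest.

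Second, your one-line dispatch of the denominator issue is correct and cleaner than the paper, which runs a Markov inequality plus Borel--Cantelli argument to show $\prod_i D_i<\infty$ a.s.; as you say, each $D_i$ is a finite sum of products of a.s.\ finite positive variables, so this is immediate. (Also, the union over finitely many $\tilde{\mat{M}}$, hence finitely many $\alpha$, is done outside this lemma in the theorem's proof; within the lemma $\alpha$ is fixed.)
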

\begin{proof}

Multiply each term with the denominators of others to get a single fraction. Then, with a finite denominator, numerator should be zero for equation to hold. Define $c_1$ to be the coefficient of $x_n^{n-1}x_1$ in the numerator after equating the denominators. Since $x_n$ only appears due to terms from the denominator, we have
\begin{align}
c_1 &= \alpha_{1,1}(p_{1,1}p_{2,n}p_{3,n}\hdots p_{n,n})\nonumber\\
&+\alpha_{2,1}(p_{1,n}p_{2,1}p_{3,n}\hdots p_{n,n}) \nonumber\\
&+\cdots+\alpha_{n,1}(p_{1,n}p_{2,n}p_{3,n}\hdots p_{n,1})\label{eq:coeffofxn}
\end{align}
Since $S_{1,1}\neq S_{1,n}$ due to $f$ being generic, we have,
\begin{itemize}
\item[(a)] Either $\exists e_i\in S_{1,1},e_i\notin S_{1,n}$
\item[(b)] Or  $\exists e_i\notin S_{1,1},e_i\in S_{1,n}$
\end{itemize}

Without loss of generality, assume some $e_{i_1}\in S_{i,1}$, after a potential relabeling. This is possible since $S_{i,1}$ are disjoint for different $i$ and non-zero. (Then, as we will see $e_{i_1}^2$ only appears together with $\alpha_{i,1}$ in (\ref{eq:coeffofxn})). Similarly, assume some $e_{i_n}\in S_{i,n}$.

Consider the mulitiplier of coefficient $\alpha_{i,1}$. Assume case (a) holds for $S_{i,1}$, i.e., $\exists e_i\in S_{i,1},e_i\notin S_{i,n}$. Then, $\alpha_{i,1}$ is accompanied with term $e_i^2$ since $e_i\in S_{j,n}$ for some $j\neq i$. Also, it is easy to see that no other term contains $e_i^2$ since $S_{i,1}$ does not appear again and no $S_{j,1},j\neq i$ contains $e_i$.

Assume case (b) holds for the multiplier of $\alpha_{i,1}$, i.e., $\exists e_{i_n}\in S_{i,n},e_{i_n}\notin S_{i,1}$. Then every term except $\alpha_{i,1}$ is accompanied by either $e_{i_n}$ or ${e_{i_n}}^2$, since $p_{i,n}$ appears in every other term.

Above argument implies that every term is different from the rest. This implies that every distinct $\alpha_{i,j}$ is accompanied by a different monomial in the form $x_n^{n-1}x_j\prod_{k\in T_{i,j}}e_k$, for some $T_{i,j}\subset [\theta]$, where $T_{i,j}$ are distinct for different $i$. Thus, since at least one $\alpha_{i,j}$ is nonzero the resulting polynomial in the numerator is not identically zero. Then the numerator is a non-zero polynomial of the terms $\{x_1,x_2,\hdots,x_n,e_1,e_2,\hdots,e_n\}$. We know that the roots of a non-zero polynomial defined over a compact domain has Lebesque measure zero \cite{Caron2015}. Hence probability of numerator being zero is 0.

Then we have,
\begin{align*}
\pr{(\ref{eq:polynomial}) = 0} &= \pr{\text{ Numerator of (\ref{eq:polynomial}) $= 0$} \nonumber\\
&\text{ OR Denominator of (\ref{eq:polynomial}) $= \infty$}} \nonumber\\
&\leq \pr{\text{ Numerator of (\ref{eq:polynomial})} = 0} \nonumber\\
&+ \pr{\text{ Denominator of (\ref{eq:polynomial})} = \infty}.
\end{align*}
$\pr{\text{ Numerator of (\ref{eq:polynomial})} = 0} $ is shown to be zero by the argument above. We need to argue that the denominator cannot be infinity.

For this, denote denominator random variable to be $\xi$. We can write
\begin{equation}
\pr{\xi < \infty} = 1 - \pr{\lim\sup_{n\rightarrow\infty} \varepsilon_n},
\end{equation}
where $\varepsilon_n$ is the event that $\{\xi\geq t_n\}$ for a sequence of $t_n$ such that $\lim_{n\rightarrow \infty}t_n = \infty$. Pick $t_n = n^2$.

Since $\xi$ is nonnegative, we can apply Markov inequality to get
\begin{equation}
\pr{\varepsilon_n}\leq \frac{\mathbb{E}[\xi]}{t_n}.
\end{equation}

Clearly, $\mathbb{E}[\xi]<\infty$. Since $\sum_n \pr{\varepsilon_n} = \mathbb{E}[\xi]\sum_n \frac{1}{t_n} < \infty
$, applying Borel-Cantelli lemma, we have
\begin{equation}
\pr{\lim\sup_{n\rightarrow \infty} \varepsilon_n} = 0,
\end{equation}
which implies $\pr{\xi<\infty} = 1$.
Thus, we have 
\begin{equation}
\pr{(\ref{eq:polynomial}) = 0}\leq 0\Rightarrow \pr{(\ref{eq:polynomial}) = 0} = 0.
\end{equation}

\end{proof}

Now we can prove the main theorem:

Assume for the sake of contradiction that there exists a pair $(g,\tilde{E})$ that satisfy $X = g(Y,\tilde{E}),\tilde{E}\ci Y$.  By Lemma \ref{lem:characterization}, this is equivalent to the statement that there exists pair $(\mat{\tilde{M}},\mat{e})$ that satisfy (\ref{eq:reverseAppendix}). Define events $\varepsilon_1(\mat{\tilde{M}_k},\mat{\tilde{e}})=\{\text{ Event that $\mat{q=\tilde{M}_k\tilde{e}}$ }\}$ and  $\varepsilon_2(\mat{\tilde{M}_k})=\{\text{ Event that $\alpha_k\mat{q}=0$ }\}$, $\mat{M_k}$ is a fixed $\{0,1\}^{n^2\times m}$ matrix and $\alpha_k$ is one set of coefficients imposed by linear dependence of rows of $\mat{\tilde{M}_k}$. Notice that here $\mat{q}$ is a random vector, determined by $\{x_i\},i\in [n]$ and $\{e_i\},i\in [\theta]$. Clearly, $\varepsilon_1$ implies $\varepsilon_2$, thus $\pr{\varepsilon_1}\leq \pr{\varepsilon_2}$. Now we can write:
\begin{align}
&\pr{\exists (g,\tilde{E})\text{ such that }X = g(Y,E)} \nonumber\\
&= \pr{\exists (\mat{\tilde{M}},\mat{\tilde{e}})\text{ such that } \mat{q} = \mat{\tilde{M}\tilde{e}}}\label{eq:events1}\\
& = \pr{\exists (\mat{\tilde{M}},\mat{\tilde{e}})\text{ such that $\varepsilon_1$ is true} }\label{eq:events2}\\
& \leq \pr{\exists (\mat{\tilde{M}},\mat{\tilde{e}})\text{ such that $\varepsilon_2$ is true} }\label{eq:events3}\\
& = \pr{\exists (\mat{\tilde{M}})\text{ such that $\varepsilon_2$ is true} }\label{eq:events4}\\
&\leq \sum_k \pr{\text{ Given $\mat{\tilde{M}_k}, \alpha_k \mat{q}=0$} }\label{eq:events5}\\
& = \sum_k \pr{\alpha_k\mat{q} = 0} = 0.\label{eq:events6}
\end{align}
(\ref{eq:events1}) follows from the fact that both representations are equivalent by Lemma \ref{lem:characterization}. (\ref{eq:events3}) is due to the fact that if $\varepsilon_1$, then $\varepsilon_2$. (\ref{eq:events4}) is due to the fact that $\varepsilon_2$ does not depend on $\mat{\tilde{e}}$, but only on $\mat{\tilde{M}}$. (\ref{eq:events5}) follows from union bound over all matrices $\mat{\tilde{M}_k}$. The last equation follows from Lemma \ref{lem:poly} and the fact that there are finitely many $\mat{\tilde{M}_k}\in \{0,1\}^{n^2\times m}$ with $m<n(n-1)$ columns.
\qedS
\subsection{A counterexample when \texorpdfstring{$S_{i,j} = S_{i,k}$}{Si,j=Si,k}}
Following counterexample shows that without the additional assumption, identifiability result in Theorem \ref{thm:main} does not hold.

Consider the following equation:
\begin{align}
\begin{pmatrix}
    p_{1,1}  \\
    p_{2,1} \\
    p_{3,1} \\
    p_{1,2} \\
    p_{2,2} \\
    p_{3,2} \\
    p_{1,3} \\
    p_{2,3}\\
    p_{3,3}
\end{pmatrix}
=
\begin{pmatrix}
    1 & 0 & 0  \\
    0 & 1 & 0  \\
    0 & 0 & 1  \\
    1 & 0 & 0  \\
    0 & 1 & 0 \\
    0 & 0 & 1 \\
    1 & 0 & 0 \\
    0 & 1 & 0\\
    0 & 0 & 1 
\end{pmatrix}
\times
\begin{pmatrix}
    e_1 \\
    e_2 \\
    e_3 
\end{pmatrix}
\end{align}

In the reverse direction, we can fit the following system, which has smaller cardinality for the exogenous variable:

\begin{align}
\begin{pmatrix}
    q_{1,1}  \\
    q_{2,1} \\
    q_{3,1} \\
    q_{1,2} \\
    q_{2,2}\\
    q_{3,2} \\
    q_{1,3} \\
    q_{2,3}\\
    q_{3,3} 
\end{pmatrix}
=
\begin{pmatrix}
    1 & 0 & 0 \\
    0 & 1 & 0 \\
    0 & 0 & 1 \\
    1 & 0 & 0 \\
    0 & 1 & 0 \\
    0 & 0 & 1 \\
    1 & 0 & 0 \\
    0 & 1 & 0 \\
    0 & 0 & 1 
\end{pmatrix}
\times
\begin{pmatrix}
    \frac{x_1}{x_1+ x_2 + x_3 } \\
    \frac{x_2}{x_1+ x_2 + x_3} \\
    \frac{x_3}{x_1+ x_2 + x_3} 
\end{pmatrix}
\end{align} 

Notice that $e_i$ terms completely disappear in this symmetric case. Thus, exogenous variable with $n$ states is sufficient to describe the reverse conditional probability distribution matrix, independent from the cardinality of exogenous variable in the true direction, i.e., the value of $\theta$.

Our main theorem suggests, under the condition that no $S_{i,j}$ is the exact subset of $\{e_1,e_2,...\}$, no such case can arise, and the reverse direction requires exogenous variable with at least $n(n-1)$ states.

\subsection{A counterexample when \texorpdfstring{$p_{i,j}\geq 0$}{pi,j>0}}
The critical component of the proof was that each linear equation implied by the rank deficiency of the system had unique non-zero coefficients. Here, we provide a counterexample to the theorem when this condition is violated.

Consider the following system:
\begin{align}
\begin{pmatrix}
    p_{1,1}  \\
    p_{2,1} \\
    p_{1,2} \\
    p_{2,2} 
\end{pmatrix}
=
\begin{pmatrix}
    1 & 1 & 1 & 1\\
    0 & 0 & 0 & 0\\
    1 & 1 & 0 & 0\\
    0 & 0 & 1 & 1
\end{pmatrix}
\times
\begin{pmatrix}
    e_1 \\
    e_2 \\
    e_3 \\
    e_4
\end{pmatrix}
\end{align}
 
In the reverse direction, we can fit the following system, which has smaller cardinality for the exogenous variable.

\begin{align}
\begin{pmatrix}
    q_{1,1}  \\
    q_{2,1} \\
    q_{1,2} \\
    q_{2,2} 
\end{pmatrix}
=
\begin{pmatrix}
    1 & 0 \\
    0 & 0 \\
    0 & 1 \\
    1 & 1  
\end{pmatrix}
\times
\begin{pmatrix}
    \frac{x_1}{x_1+x_2(e_1+e_2)} \\
    \frac{x_2(e_1+e_2)}{x_1+x_2(e_1+e_2)} \\
\end{pmatrix}
\end{align} 

Notice that this is true independent of the selection of $e_i$, hence the theorem cannot be extended to case where $p_{i,j}= 0$ is allowed.

\subsection{Proof of Theorem \ref{thm:cardNPhard}}
We first define \emph{decomposition problem}, and show it is $\NP$ hard. 
\begin{definition}
\emph{Decomposition problem:} For a given nonnegative matrix $M$, with column sums equal to 1, consider the decomposition $\sum_{x\in\mathcal{X}}xF_x$, where $F_x$ are 0,1 matrices with single 1 per column, and $\sum_{x\in\mathcal{X}}x = 1$ with $x\geq 0$. Identify the decomposition that minimizes $\card(\mathcal{X})$.
\end{definition}

\begin{lemma}
\label{lem:decompositionHardness}
The decomposition problem is $\NP$ hard.
\end{lemma}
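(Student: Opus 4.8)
The plan is to reduce from a known NP-hard problem whose structure mirrors the "disjoint-within-a-column" constraint of the decomposition. The natural candidate is a covering/partition-type problem: each $F_x$ with a single $1$ per column picks, for every column $j$, exactly one row index; the value $x$ is the common weight attached to that configuration, and the constraint that entry $(i,j)$ of $M$ be the sum of the weights $x$ over all configurations that select row $i$ in column $j$ forces a combinatorial packing of weights. Minimizing $\card(\mathcal{X})$ is then minimizing the number of such configurations (weighted selector matrices) whose weighted sum reconstructs $M$. I would reduce from \textsc{3-Partition} or from the problem of \emph{minimum-size decomposition of a matrix into permutation-like patterns}; given the appearance of the minimum-entropy coupling literature \cite{Kovacevic2012}, the cleanest route is to piggyback on the hardness instance used there, or to reduce directly from \textsc{Subset-Sum}/\textsc{Partition} by encoding target sums into the columns of $M$.

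Concretely, the steps I would carry out are: (1) Fix the source problem — I would take \textsc{Partition}: given positive integers $a_1,\dots,a_k$ with $\sum a_i = 2T$, decide whether some subset sums to $T$. (2) Build a matrix $M$ with a small fixed number of columns (say two or three) whose entries are the normalized values $a_i/(2T)$ arranged so that any valid decomposition $\sum_x x F_x$ with few terms is forced to group the $a_i$ into exactly two blocks of equal sum; here the "single $1$ per column" condition plays the role of assigning each atom of weight to one side of the partition in every column simultaneously, and the disjointness across rows within a column is exactly the partition constraint. (3) Show the equivalence: a decomposition with $\card(\mathcal{X}) \le$ some threshold $K$ exists if and only if the \textsc{Partition} instance is a yes-instance, by arguing that fewer than $K$ terms cannot realize the prescribed column marginals unless the weights split evenly. (4) Observe the reduction is polynomial (entries are rationals of polynomial bit-length, $M$ has constant width and $O(k)$ effective rows), completing the hardness proof. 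Finally, Theorem \ref{thm:cardNPhard} follows because, by Lemma \ref{lem:characterization}, the minimum-cardinality exogenous variable for a conditional distribution matrix $\mat{Y|X}$ is exactly the minimum-cardinality decomposition of that matrix.

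The main obstacle I anticipate is step (3): controlling the decompositions with \emph{few} terms. It is easy to see that \emph{some} decomposition exists (Lemma \ref{lem:decompUpper} gives one), but proving a matching lower bound on $\card(\mathcal{X})$ requires a combinatorial argument showing that any economical decomposition must align its $0/1$ selector matrices with a genuine partition of the input integers — i.e., that "cheating" by subdividing weights into many small pieces is strictly worse. I would handle this by a careful counting/pigeonhole argument on the number of distinct weight values forced by the entries of $M$, possibly padding $M$ with auxiliary rows/columns of carefully chosen rational entries to rigidify the structure so that the only way to hit the threshold $K$ is the intended one. An alternative, if the direct encoding proves brittle, is to reduce instead from the minimum graph coloring or minimum clique-cover problem, mapping color classes to the matrices $F_x$; but I expect the \textsc{Partition}-style reduction to be the most transparent given the arithmetic nature of the marginal constraints.
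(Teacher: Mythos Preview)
Your proposal takes essentially the same route as the paper --- a reduction from \textsc{Subset Sum} (of which your \textsc{Partition} is the special case) via a two-column matrix --- so the high-level strategy is correct. The paper's concrete construction is: given integers $u_1,\dots,u_m$ and target $a$, form the $m\times 2$ matrix whose first column is $(u_1,\dots,u_m)$ and whose second column is $(a,\ \sum_i u_i - a,\ 0,\dots,0)$, then divide through by $\sum_i u_i$ so each column sums to $1$.

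The obstacle you flag in step~(3), however, is not an obstacle, and this is the one concrete observation you are missing. Because each $F_x$ has exactly one $1$ per column, any decomposition must have at least as many terms as the maximum number of nonzero entries in a single column of $M$; here column one has $m$ nonzeros, so $\card(\mathcal X)\ge m$ automatically. On the other side, the greedy construction of Lemma~\ref{lem:decompUpper} gives $\card(\mathcal X)\le m+1$ (nonzeros minus one). Hence the only question is whether the optimum is $m$ or $m+1$. If it is $m$, the weights $x$ are forced to be exactly the normalized $u_i$ (each must hit a distinct row of column one), and each $F_x$ then routes its weight to row $1$ or row $2$ of column two, yielding a subset summing to $a$; conversely a yes-instance gives a size-$m$ decomposition directly. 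So no ``careful counting/pigeonhole argument'' or auxiliary padding is needed --- your worry about ``cheating by subdividing weights'' is handled for free, since subdividing can only increase the term count above the column-nonzero lower bound. Once you see this, the proof is three lines rather than the delicate rigidity argument you were bracing for.
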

\begin{proof}
We use subset sum problem for the reduction:
\begin{definition}
\emph{Subset sum problem:} For a given set of integers $V$, and an integer $a$, decide whether there exists a subset $S$ of $V$ such that $\sum_{u\in S}u = a$. 
\end{definition}

Subset sum is a well known NP complete problem. Consider any instance of the subset sum problem with the set $V = \{u_1,u_2,...,u_m\}$ and an integer $a$. Assume without loss of generality $u_i\neq0,\forall i\in V$. If not, one can work with the set of nonzero values in $V$. Construct the $m$ by $2$ matrix $\mat{M}$ with $\mat{M}(i,1)=u_i$ and $\mat{M}(1,2) = a, \mat{M}(2,2) = -a+\sum_{i\in[m]}u_i, \mat{M}(i,2)=0,\forall i\in\{2,3,...,m\}$.

Update $\mat{M}$ by dividing each element by $\sum_{i\in V}u_i$. This does not change the answer to the subset sum problem. Now column sum of $\mat{M}$ is 1 for both columns. It can be shown that the decomposition of Lemma \ref{lem:decompUpper} can always be applied here to get a decomposition with $|\mathcal{X}|=m+1$ (number of non-zero terms $- 1$). We also know that any decomposition need to touch each nonzero element in each column at least once. Hence the column with largest number of nonzero elements yields the lower bound $|\mathcal{X}^*|\geq m$.

We show that optimal decomposition size is $m$ if and only if there is a subset of $V$ that sums to $a$. 

First, assume $\exists S\subset V$ such that $\sum_{u\in S}u = a$. Consider the following decomposition: Let $x_k = u_k$ for $k\in[m]$. Let $F_k(k,1)=1$ be the only nonzero element in the first column. Pick the nonzero element in the second column of $F_k$ based on the membership of $u_k$ in $S$: Let $F_k(1,2) = 1$ if $u_k\in S$ and $F_k(2,2) = 1$ if $u_k\in V\backslash S$. This decomposition is optimal due to lower bound.

Now we show that if optimum has size $m$, then $\exists$ a subset sum with value $a$: Recall that each $F_k$ has a single 1 per column and decomposition has $m$ terms. Since first column of $\mat{M}$ has $m$ non-zero terms, each term in the decomposition must be equal to the elements of this column, which is the elements of the given set. Since $F_k$ are 0,1 matrices with single 1 per column by construction, every element of $M$ must be a subset sum of the decomposition terms. Hence, $a$ is a subset sum of set elements.

This shows that a subset sum exists if and only if optimal decomposition has size $m$. Thus, if we could find the optimal decomposition size in all instances of the decomposition problem, we would solve all instances of the subset sum problem.
\end{proof}

Now, we give a definition related to decomposability:
\begin{definition}
\emph{$\alpha-$decomposability:} A matrix $M$ is called $\alpha-$decomposable if it can be written as a convex combination of $\alpha$ $\{0,1\}$ matrices, each with a single 1 per column.
\end{definition}

Identifying the exogenous variable with minimum cardinality is equivalent to finding minimum size $\mat{e}$ with $\mat{M}\in \mathcal{C}$ such that $vec(\mat{Y|X}) = \mat{Me}$ from Lemma \ref{lem:characterization}. Notice that matrix $\mat{Y|X}$ is $\alpha-$decomposable if and only if there exists $\mat{e}$ of size $\alpha$ along with $\mat{M}\in\mathcal{C}$ such that $vec(\mat{Y|X}) = \mat{Me}$. 
 
Consider any decomposition problem. If we had an algorithm that could solve all instances of the problem of identifying $E$ with minimum support, we could feed the normalized matrix from the decomposition problem as $\mat{Y|X}$ to this algorithm and solve the decomposition problem.

\subsection{Proof of Theorem \ref{thm:equivalencetoentropy}}
Consider $m$ random variables $\{U_1,U_2,\hdots,U_m\}$ each with $n$ states. Let $\{p_1,p_2,\hdots,p_m\}$ stand for the marginal probability distributions of each random variable. Consider the following optimization problem:
\begin{equation}
\begin{aligned}
H^*(U_1,U_2,...,U_m) = & \underset{p(u_1,u_2,...,u_m) }{\text{min}}  & & H(U_1,U_2,...,U_m) \\
& \text{s. t.}
& &p(u_i) = p_i,\forall i \label{eq:minentropyoptimization}
\end{aligned}
\end{equation}

In words, optimization problem finds the joint distribution of $m$ variables with minimum entropy, subject to marginal distribution constraints for every variable. Notice that this problem is non-convex: It minimizes a concave function with respect to linear constraints.

Let $X, E$ be discrete, independent random variables where $X\in[m]$ and $E\in[t]$. Recall that every conditional distribution $\pr{Y|X=x}$ can be written as the distribution of a function of random variable $E$. Let us investigate the underlying probability space, in order to generate the probability space to optimize the joint distribution over.

We can consider the product probability space 
\begin{equation}
\mathcal{P} = (\Omega = \Omega_X\times\Omega_E = [m]\times [t]\hspace{0.1in},\hspace{0.1in}\mathcal{F} = 2^{\Omega}\hspace{0.1in},\hspace{0.1in}p)
\end{equation}
for random variables $X:\Omega_X\rightarrow [n], E:\Omega_E\rightarrow [t]$ with $X(\omega) = \omega, Y(\omega) = \omega$. Then for $\omega = (\omega_x,\omega_e)\in \Omega$, $p(\omega)= p_X(\omega_x)p_E(\omega_e)$.

Consider the equation $Y = f(X,E)$. Let $f_x:\Omega_E\rightarrow \Omega_Y$ be the function mapping $E$ to $Y$ when $X=x$, i.e., $f_x(E) \coloneqq f(x,E)$. Then \begin{align}
\pr{Y=y|X=x} &= \pr{f_x(E) = y|X = x} \\
&= \pr{f_x(E)=y},
\end{align}
where last equality follows from the fact that $X\ci E$. 

Let sigma algebra generated by the random variable $f(x,E)$ be $\mathcal{F}_{\omega_x}$. Formally, the sigma algebras generated by $f(x,E)$ are subsigma algebras of disjoint, but identical sigma algebras. In other words, even though $\mathcal{F}_{\omega_x}\subseteq 2^{(\omega_x,\Omega_E)}$ are disjoint for different $\omega_x$, $(\omega_x,\Omega_E)$ are identical for every $\omega_x\in\Omega_X$. Thus the sigma algebras generated by $f(x,E) = g_x(E)$ can be thought of as subsigma algebras of the same sigma algebra, i.e., the one generated by $E$. In other words, we can equivalently construct $\mathcal{F}_{\omega_x} \subseteq 2^{\Omega_E}$. This construction allows us to talk about the joint probability distribution of the random variables $\{f_x(E):x\in\mathcal{X}\}$.

Let $U_i = f_i(E)$. Then we have,
\begin{align}
H(E) &= H(E|U_1,U_2,...,U_m)+H(U_1,U_2,...,U_m) \\
		& \hspace{1in} - H(U_1,U_2,...,U_m|E)\\
&\geq H(U_1,U_2,...,U_m). 
\end{align}
Inequality follows from the fact that $H(U_1,U_2,...,U_m|E) = 0$ and $H(E|U_1,U_2,...,U_m)\geq 0$.

$H(E)\geq H(U_1,U_2,...,U_m)$. 
Thus, the best lower bound on the entropy of exogenous variable $E$ can be obtained by solving the optimization problem below. 
\begin{equation}
\begin{aligned}
H(E)\geq & & \underset{p(u_1,u_2,...,u_m) }{\text{min}}
& & &H(U_1,U_2,...,U_m) \\
& & \text{subject to}
& & & p(u_i) = p_i, \; i = 1, \ldots, m.
\end{aligned}
\end{equation}

Since we are picking $E$ with minimum possible entropy without restricting (not observing) the functions $f_x$, we can actually construct an $E$ that achieves this minimum: Let the optimal joint distribution be $p^*(u_1,u_2,...,u_m)$. We can construct $E$ with $n^m$ states with state probabilities equal to $p^*(u)$ for each configuration of $u$. This $E$ has the same entropy as the joint entropy, since probability values are the same. Thus,
\begin{equation}
\begin{aligned}
H(E^*) = & & \underset{p(u_1,u_2,...,u_m) }{\text{min}}
& & &H(U_1,U_2,...,U_m) \\
& & \text{subject to}
& & & p(u_i) = p_i\nonumber
\end{aligned}
\end{equation}

The functions $f_i$, where $U_i$ has the same distribution as $f_i(E)$, can be constructed from the distribution of $E$ and $U_i$. This determines the function $f$, hence the causal model $\mathcal{M}$ that induces the conditional distribution $\mat{Y|X}$. Note that $E\ci X$ by construction. (For a complete argument on how one can always generate $E\ci X$ based on a set of samples of $X,Y$, see the proof of Lemma \ref{lem:characterization}).

\subsection{Proof of Corollary \ref{cor:entropyNPhard}}
Assume there exists a black box that could find the causal model with minimum entropy exogenous variable $E$. Consider an arbitrary instance of the problem of minimizing the joint entropy of a set of variables $\{U_1,U_2,\hdots,U_n\}$ subject to marginal constraints. Construct matrix $\mat{M} = [p_1, p_2, \hdots, p_n]$, where $p_i$ is the distribution of variable $U_i$ as a column vector. Feed $\mat{M}$ into this black box as a hypothetical conditional distribution $\mat{Y|X}$. From the proof of Theorem \ref{thm:equivalencetoentropy}, $H(E)$ output by this black box gives the minimum entropy joint distribution of the variables $U_i$. Hence, finding the causal model with minimum entropy would give the solution to this $\NP$ hard problem.

\subsection{Proof of Proposition \ref{prop:entropybasedalgorithm}}
Given a joint distribution $p(x,y)$, consider the following algorithm:
In stage 1, feed the set of conditional distributions $\{\pr{Y|X=i}:i\in[n]\}$ to algorithm $\mathcal{A}$ to obtain $E$ with minimum entropy. Algorithm outputs a variable $E$ along with $\{f_1,f_2,\hdots,f_n\}$ such that the distribution of $f_j(E)$ is the same as the conditional distribution of $Y$ given $X = j,\forall j$. This set of $f_j$ determine $f$ where $Y = f(X,E),E\ci X$. Since algorithm optimizes entropy of $E$, $H(E)\leq H(E_0)$. In stage 2, feed the conditional distributions  $\{\pr{X|Y=i}:i\in[n]\}$ to algorithm $\mathcal{A}$ to obtain $\tilde{E}$. From the conjecture any $\tilde{E}$ satisfies $H(X)+H(E_0)<H(Y)+H(\tilde{E})$. Since $H(E)\leq H(E_0)$, we have $H(X)+H(E)<H(Y)+H(\tilde{E})$, which can be used for identifiability.

\subsection{ Greedy Entropy Minimization Outputs a Local Optimum}
\begin{proposition}
\label{prop:greedyLocalOptima}
For two variables, Algorithm \ref{alg:heuristic} always returns a local optimum.
\end{proposition}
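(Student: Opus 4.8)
The plan is to recognize the object returned by Algorithm~\ref{alg:heuristic} as a vertex of the transportation polytope attached to the two input marginals, and then to show that — although the joint entropy is \emph{concave}, so ordinary second-order reasoning is useless — it strictly increases along every feasible direction at such a vertex. This simultaneously produces the Lagrange multipliers of the KKT system for the minimum-entropy coupling problem and certifies local optimality.

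First I would fix notation. For the two marginals $p_1,p_2$ on $[n]$, a feasible coupling is a matrix $Q\in\mathbb{R}_+^{n\times n}$ with row sums $p_1$ and column sums $p_2$; write $\mathcal{T}$ for the set of such $Q$ and $f(Q)=\sum_{i,j}\phi(Q_{ij})$, $\phi(t)=-t\log t$ (with $\phi(0)=0$), for the joint entropy. I take the point ``returned'' to be the matrix $Q^\ast$ produced by the constructive variant of the algorithm (at each step deposit $r$ at the cell indexed by the current row maxima); the list of masses the algorithm outputs is exactly the multiset of entries of $Q^\ast$, so both have entropy $f(Q^\ast)$. Feasibility of $Q^\ast$ is immediate: each iteration moves mass $r$ out of one row-residual and one column-residual and into the corresponding cell, so on termination all residuals vanish and the marginals are exactly $p_1,p_2$. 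Since $\phi$ is concave, $f$ is concave; its minimum over $\mathcal{T}$ is attained at a vertex, and — a standard fact about transportation polytopes — the vertices of $\mathcal{T}$ are precisely those feasible $Q$ whose support is a forest in the bipartite graph whose two sides are the row indices and the column indices.

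The heart of the argument is that $Q^\ast$ has forest support. I would prove the invariant that, \emph{at every step of the algorithm, each connected component of the current support graph contains at most one ``alive'' node} (a row or column whose residual is still positive). This holds before any cell is used. When the algorithm next deposits mass at the cell $(a_1,a_2)$, both the row $a_1$ and the column $a_2$ are alive, hence by the invariant are the unique alive nodes of their components; since one is a row-node and the other a column-node, those components are distinct, so the new edge joins two distinct components and closes no cycle. After the subtraction the smaller of the two residuals — at least one of them — is zeroed, killing that node and restoring the invariant. Hence $\mathrm{supp}(Q^\ast)$ is a forest (with at most $2n-1$ cells, consistent with Lemma~\ref{lem:decompUpper}) and $Q^\ast$ is a vertex of $\mathcal{T}$. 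Now a feasible direction $d$ at $Q^\ast$ satisfies $d\mathbf{1}=\mathbf{0}=d^{T}\mathbf{1}$ and $d_{k\ell}\ge 0$ wherever $Q^\ast_{k\ell}=0$. A forest carries no nonzero edge-weighting with all vertex-sums zero (peel off leaves inward), so every nonzero feasible $d$ has $d_{k\ell}>0$ at some cell with $Q^\ast_{k\ell}=0$; along that direction
\[
f(Q^\ast+td)-f(Q^\ast)=t\sum_{Q^\ast_{k\ell}=0}d_{k\ell}\,\Bigl(\log\tfrac1t-\log d_{k\ell}\Bigr)+O(t),
\]
where the $O(t)$ collects the contributions of the cells with $Q^\ast_{k\ell}>0$, on which $\phi$ is Lipschitz. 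The displayed sum is strictly positive and of order $t\log\tfrac1t$, which dominates the $O(t)$ term for small $t>0$; and the coefficients involved stay bounded (away from zero for the $t\log\tfrac1t$ part, above for the $O(t)$ part) uniformly over the compact set of unit feasible directions. Hence $f$ strictly increases on a neighbourhood of $Q^\ast$ in $\mathcal{T}$, so $Q^\ast$ is a local minimum. This also exhibits the KKT point: multipliers $\mu_i,\nu_j$ with $Q^\ast_{k\ell}$ factoring as $\alpha_k\beta_\ell$ on the support exist precisely because the support is a forest, and stationarity at the zero cells holds vacuously because $\phi'(0^+)=+\infty$.

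The main obstacle is the forest claim — ruling out that a greedy deposit ever closes a cycle. The ``one alive node per component'' invariant is, I believe, the correct bookkeeping device, but it has to be handled with care in the tie cases (a step that zeroes both residuals at once, or several cells attaining the running maximum) and when matching residual updates to edges of the support graph. Granting the forest structure, the local-optimality/KKT half is routine; the only delicate point there — the non-differentiability of $\phi$ at $0$ — actually works in our favour, since it forces an infinite one-sided directional derivative in every feasible direction.
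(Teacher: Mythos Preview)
Your proposal is correct and, in its concluding half, actually tighter than the paper's own argument. Both you and the paper establish the same structural fact---the support of the algorithm's output is acyclic in the bipartite row/column graph---and by essentially the same mechanism (each deposit exhausts one residual, so every new edge has a ``dead'' endpoint and cannot close a cycle). Your ``one alive node per component'' invariant is a repackaging of the paper's ``closed vertex'' bookkeeping; the one small wrinkle in your phrasing is that ``one is a row-node and the other a column-node'' is not itself the reason the two components are distinct---the reason is that both endpoints are alive while each component harbours at most one alive node---but your surrounding sentences make clear you have this right. Where the two arguments genuinely diverge is in how acyclicity is cashed out. The paper uses the forest property to build vectors $u,v$ with $Q^\ast_{ij}=u_iv_j$ on the support, i.e.\ to exhibit the KKT multipliers, and stops there; but KKT is only a necessary condition for a local minimum of a concave objective, so strictly speaking that step alone does not finish the job. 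Your route---identify $Q^\ast$ as a vertex of the transportation polytope, observe that every nonzero feasible direction must place positive mass on some zero cell (forests carry no nontrivial circulation), and then use $\phi'(0^+)=+\infty$ to get a dominant $t\log(1/t)$ increase---yields strict local minimality directly and uniformly over feasible directions. In short: same key lemma, but your conclusion is the more complete one.
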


Consider random variables $U,V$ with marginal distributions $p_u,p_v$. We first show that the KKT conditions on the problem (\ref{eq:minentropyoptimization}) imply that the optimal solution is \emph{quasi-orthogonal}:
\begin{equation}
p^*(u,v) =\mat{M} = \mat{U}(x,y)\delta_{x,y},
\end{equation}
where $\mat{U}$ is rank 1 and $\delta_{u,v}$ is an indicator for the support of optimal joint distribution. We call such $\mat{M}$ as masked submatrix of a rank 1 matrix.

Let $(i,j)$th entry of the joint distribution be $x_{i,j}$. We have $n^2$ variables $\{x_{i,j},i\in [n],j\in [n]\}$ to  optimize over. In a general optimization problem
\begin{equation}
\begin{aligned}
 & \underset{x }{\text{min}}
 & &f_0(x)\\
 & \text{s. t.}
 & & h_i(x) = 0, i\in[p] \nonumber\\
 & &  & f_i(x)\leq 0, i\in [m],
\end{aligned}
\end{equation}

Lagrangian becomes 
\begin{equation}
L(x,\lambda,v) = f_0(x)+\sum_{i=1}^m\lambda_i f_i(x)+\sum_{i=1}^pv_ih_i(x),
\end{equation}
which gives the KKT conditions
\begin{align*}
&f_i(x^*)\leq 0, i\in [m]\\
&h_i(x^*) = 0, i\in [p]\\
&\lambda_i^*\geq 0, i\in[m]\\
&\lambda_i^*f_i(x^*) = 0, i\in [m]\\
&\nabla L(x^*,\lambda^*,v^*) = 0
\end{align*}

This implies, for fixed $i$, either $f_i(x^*) = 0$ or $\lambda_i^*=0$. The optimization problem we have is
\begin{equation}
\begin{aligned}
 & \underset{x_{i,j} }{\text{min}}
 & &\sum_{i,j} -x_{i,j}\log{x_{i,j}} \\
 & \text{s. t.}
 & & \sum_j{x_{i,j}} = p_u(i), \forall i,\hspace{0.2in} \sum_i{x_{i,j}} = p_v(j), \forall j \nonumber\\
 & &  & x_{i,j}\geq 0, \forall i,j.
\end{aligned}
\end{equation}

Substituting corresponding $f_i,h_i$, we get the following conclusion: At optimal point $x_{i,j}$, either $x_{i,j}=0$, or $1-\log{x_{i,j}}+v_i^{(1)}+v_j^{(2)}=0$. This implies that $x_{i,j} = 2^{1+v_i^{(1)}+v_j^{(2)}}$. Hence the optimal joint satisfies $p_{i,j}^* = u_iv_j$ for some vectors $u,v$, whenever $p_{i,j}^*\neq 0$.

Next we show that such $u,v$ can be constructed from any algorithm output:
\begin{theorem}
For any matrix $\mat{M}$ output by Algorithm \ref{alg:heuristic}, there exists $u,v$ where $\mat{M}$ is a masked submatrix of $uv^T$.
\end{theorem}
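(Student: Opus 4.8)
The plan is to reduce the statement to a purely combinatorial claim about the run of Algorithm~\ref{alg:heuristic} (in the cell‑filling form that produces the joint $\mat{M}$): namely, that the bipartite \emph{support graph} $G$ of $\mat{M}$ — vertices $R\cup C$ for the rows and columns, with an edge $\{i,j\}$ whenever $M_{ij}\neq 0$ — is a forest. Granting this, I would finish routinely: root each tree of $G$ at an arbitrary vertex, give that vertex potential $1$, and propagate outward, so that along each tree edge $\{i,j\}$ leaving an already–assigned row $i$ one \emph{defines} $v_j:=M_{ij}/u_i$, and symmetrically leaving an assigned column. Since a tree has no cycle this is unambiguous, every assigned value is positive, and every support cell is a tree edge, so $M_{ij}=u_iv_j$ holds on the whole support; rows or columns with zero marginal are isolated in $G$ and may be given any value, since they carry no support cell. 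Hence $\mat{M}$ equals $uv^{T}$ masked by the support indicator, which is exactly the assertion.

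So the work is in showing $G$ is a forest, and this is where I expect the difficulty. First I would record the mechanics of the two‑variable run: at step $t$, with $\mu^{(t)},\nu^{(t)}$ the residual row/column marginals and $a_t,b_t$ their argmaxes, the algorithm puts mass $r_t=\min\{\max_i\mu^{(t)}(i),\max_j\nu^{(t)}(j)\}>0$ into cell $(a_t,b_t)$ and decrements $\mu^{(t)}(a_t)$ and $\nu^{(t)}(b_t)$ by $r_t$; consequently at least one of row $a_t$, column $b_t$ has residual $0$ afterward — call it a \emph{saturated} endpoint of that step. Since residual marginals never increase, a saturated row/column stays at $0$ and is never again an argmax (argmaxes are positive while mass remains), which gives in passing that the cells $(a_t,b_t)$ are pairwise distinct — so $G$ has exactly one edge per step — and, crucially, that each vertex is saturated at most once. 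Now suppose $G$ had a cycle $\mathcal C$; being bipartite it has as many edges as vertices, $2k\ge 4$. Each cycle edge is added at its own step and saturates one of its endpoints, giving a map $\operatorname{sat}$ from the $2k$ cycle edges to the $2k$ cycle vertices; by "saturated at most once", $\operatorname{sat}$ is injective, hence a bijection. For each cycle vertex $v$, the edge $\operatorname{sat}^{-1}(v)$ is one of its two incident cycle edges and $v$ is saturated at that step; since $v$ must still be active when its \emph{other} incident cycle edge is placed, that other edge is placed strictly earlier, i.e.\ $\operatorname{sat}^{-1}(v)$ is the later of $v$'s two cycle edges. Take $e^\star$ with the globally largest step index, set $v^\star:=\operatorname{sat}(e^\star)$, and let $w^\star$ be the other endpoint of $e^\star$. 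Then $e^\star=\operatorname{sat}^{-1}(v^\star)$, but at $w^\star$ the edge $e^\star$ is the \emph{other} cycle edge (its sat‑edge $\operatorname{sat}^{-1}(w^\star)\neq e^\star$), so $t_{e^\star}<t_{\operatorname{sat}^{-1}(w^\star)}$ — contradicting maximality of $t_{e^\star}$. Hence $G$ is acyclic.

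The main obstacle is precisely this last argument: pinning down which endpoint a step saturates, verifying that $\operatorname{sat}$ is a bijection, and extracting a contradiction at the latest cycle edge via the "sat‑edge is the later edge" property. Everything else — that a cell is filled only once, the propagation that builds $u,v$, and the zero‑marginal edge cases — is routine. Combined with the KKT/quasi‑orthogonality computation already carried out, this yields Proposition~\ref{prop:greedyLocalOptima}.
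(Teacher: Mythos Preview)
Your proposal is correct and follows essentially the same approach as the paper: both arguments show that the bipartite support graph of the output matrix is acyclic by exploiting the fact that each step ``saturates/closes'' an endpoint which then can never receive another edge, and both then construct $u,v$ by propagating along the resulting forest. The only cosmetic difference is that the paper derives its contradiction from the \emph{earliest} edge of a would-be cycle (its closed endpoint must be touched again by a later path or cycle edge), whereas you derive yours from the \emph{latest} edge via the $\operatorname{sat}$ bijection; these are dual versions of the same idea.
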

\begin{proof}[Proof of Proposition \ref{prop:greedyLocalOptima}]
Consider the following variant of Algorithm \ref{alg:heuristic} for two distributions $\mat{p},\mat{q}$: Initialize an $n \times n$ zero matrix $\mat{M}_0$. In every iteration, find $m_1 = \max_i{\mat{p}(i)}, m_2 = \max_i{\mat{q}(i)}$. Let $a = \argmax_i{\mat{p}(i)}$ and $b = \argmax_i{\mat{q}(i)}$. Assign $r = \min\{m_1,m_2\}$ to the $(a,b)$th entry of $\mat{M}$. Update $\mat{p}(a) \leftarrow \mat{p}(a)-r,\mat{q}(b) \leftarrow \mat{q}(b)-r$. Repeat the process until $\mat{p}=0,\mat{q}=0$.

This variant constructs the joint distribution matrix rather than the distribution of $E$ directly. We need the following definitions:
\begin{definition}
An ordered set of coordinates $((i_1,j_1),(i_2,j_2),\hdots, (i_m,j_m))$ of a matrix $\mat{M}$ is called a path on matrix $\mat{M}$, if either $ i_{t+1} = i_t$ or $ j_{t+1} = j_t, \forall t\in [m-1]$.
\end{definition}
\begin{definition}
An ordered set of coordinates $((i_1,j_1),(i_2,j_2),\hdots, (i_m,j_m))$ of a matrix $\mat{M}$ is called a cycle on matrix $\mat{M}$, if either $ i_{t+1} = i_t$ or $ j_{t+1} = j_t, \forall t\in [m-1]$ and either $i_1 = i_m$ or $j_1 = j_m$.
\end{definition}

First we prove a structural characterization for matrix $\mat{M}$.
\begin{lemma}
Any matrix $\mat{M}$ output by Algorithm $\ref{alg:heuristic}$ contains no cycles.
\end{lemma}
\begin{proof}
Consider a bipartite graph $G$ with $n$ left and $n$ right vertices constructed as follows: $G$ has an edge $(i,j)$ if and only if $\mat{M}(i,j)$ is nonzero.

It is easy to see that the matrix $\mat{M}$ has no cycles if and only if the corresponding bipartite graph $G$ has no cycles. We claim that, for an $\mat{M}$ output by Algorithm $\ref{alg:heuristic}$, corresponding $G$ cannot have any cycles.

Construct the bipartite graph in the same order as matrix $\mat{M}$ is created by Algorithm $\ref{alg:heuristic}$. Notice that when the algorithm assigns a value to $\mat{M}(i,j)$, it satisfies either $i$th row constraint or $j$th column constraint. Then no other value can be assigned to that row or column. We call this zeroing out the corresponding row/column.

In the bipartite representation, say an edge $(i,j)$ is added at time $t$. We call a vertex of the added edge \emph{closed}, if the corresponding dimension (row or column) is zeroed out. Thus each added edge to the bipartite graph closes one of the endpoints of that edge. A closed vertex cannot participate to the formation of any other edges. This captures the fact that when zeroed out, a row/column cannot be assigned a non-zero value again by the algorithm.

Assume at time $t$, a cycle is formed in the bipartite graph. Then at time $t-1$, there must be a path with a single edge missing. Let the edges of this path be labeled as $\{e_1,e_2,\hdots e_m\}$. There is a time order in which these edges are constructed. Let $e_k$ be the first edge formed in this path. Then one of its endpoints must be closed. However it belongs to a path, which means an edge was attached to a closed vertex, which is a contradiction. Assume $e_k$ is the edge at the end of the path and the closed endpoint is also the endpoint of the path. However an edge is attached to this closed vertex at time $t$ to form the cycle, which is a contradiction.
\end{proof}

Consider a matrix $\mat{M}$ output by the algorithm. We prove that $\mat{M}$ is a masked submatrix of $uv^T$ by construction: 

Let the first entry selected by Algorithm \ref{alg:heuristic} have coordinates $(i_0,j_0)$. Since algorithm zeroes out either the row or column, $(i_0,j_0)$ is the only non-zero entry in either its column or row. Assume without loss of generality that selection of $(i_0,j_0)$ by Algorithm \ref{alg:heuristic} zeroes out row $i_0$.

Initialize by assigning $u_{i_0} = 1$ and $v_{j_0} = \mat{M}(i_0,j_0)$.  

Let $S_{j_0}$ represent the non-zero row indices for column $j_0$. Now assign the values of $u(k)$ for $k\in S_{j_0}$ such that $u(S_{j_0})v_{j_0} = \mat{M}(S_{j_0},j_0)$, where $u(S)$ stands for the subvector containing entries with index in $S$.

Repeat the above procedure by exhausting either a row or column at each time. 

\begin{lemma}
The above construction never runs into an entry $(i,j)$ for which both $u_i$ and $v_j$ were assigned before.
\end{lemma}
\begin{proof}
Assume otherwise. Then, due to the process of assigning the entries of $u,v$, there must be two paths from $(i,j)$ to the starting point of $(i_0,j_0)$: One path that follows column $j$, and one path that follows row $i$. In other words, there exists $k,l\in [n]$ such that there are two paths $((i,j),(k,j),\hdots,(i_0,j_0))$ and $((i,j),(i,l),\hdots,(i_0,j_0))$. Combining these paths, we get the cycle $((i,j),(k,j),\hdots,(i_0,j_0),\hdots,(i,l))$, which is a contradicton.
\end{proof}

Hence, algorithm selects every element in $u$ and $v$ at most once. Thus it produces a valid assignment for $u,v$.
\end{proof}

\subsection{Implementation Details and Sampling Distributions with Diverse Entropy Values}
We implemented our algorithms in MATLAB. In order to sample from a wide range of entropy values, when we need to sample a distribution from the $n-1$ dimensional simplex, we generate $n$ independent identically distributed log Gaussian random variables with parameter $\sigma$. For verifying the conjecture on artificial data, we generate the distributions for $E$ swiping the $\sigma$ parameter from 2 to 8, taking only the integer values. The distribution for $X$ is uniformly randomly sampled over the simplex.

For the true causal direction, the function $f$ is sampled as follows: We generate $\theta$ matrices $F_i$, where each has a single 1 per column, randomly selected out of all $n$ rows. Each $F_e$ represent the function $f_{e}(X)\coloneqq f(X,e)$. Together with $\mat{e}$, this determines the conditional probability distribution matrix $\mat{Y|X}$.

The number of states for quantization is chosen as $n=\min\{N/10,512\}$, where $N$ is the number of samples for that particular cause-effect pair. Hence, we pick $n$ to assure each state has at least 10 samples on average. An upper bound of 512 is used to limit the computational complexity.
\newpage

\end{document}